\documentclass[10pt]{article} 
\usepackage[preprint]{tmlr}

\usepackage{amsmath,amsfonts,bm}

\def\eqref#1{equation~\ref{#1}}

\def\1{\bm{1}}

\DeclareMathAlphabet{\mathsfit}{\encodingdefault}{\sfdefault}{m}{sl}
\SetMathAlphabet{\mathsfit}{bold}{\encodingdefault}{\sfdefault}{bx}{n}

\usepackage{enumitem}
\usepackage[normalem]{ulem}
\usepackage{lineno}

\usepackage{algorithm}
\usepackage{algcompatible} 
\usepackage{amsthm}
\usepackage{amsmath,amssymb,amsfonts}
\usepackage{bm}
\usepackage{mathtools}
\usepackage{siunitx}

\usepackage{graphicx}
\usepackage{float}
\usepackage{caption}
\usepackage{subcaption}

\usepackage{hyperref}
\usepackage{url}
\hypersetup{
    colorlinks=true,  
    linkcolor=blue,   
    citecolor=blue,   
    urlcolor=blue     
}
\usepackage{xr-hyper}

\newtheorem{theorem}{Theorem}
\newtheorem{lemma}{Lemma}
\newtheorem{corollary}{Corollary}
\newtheorem{remark}{Remark}
\newtheorem{definition}{Definition}

\usepackage{booktabs}
\usepackage{array}
\usepackage{multirow}
\usepackage{multicol}
\usepackage{makecell}
\usepackage{microtype}
\newcolumntype{B}{!{\vrule width 1pt}}

\usepackage{colortbl}
\usepackage[table]{xcolor}
\usepackage{xcolor}

\definecolor{bestOurs}{RGB}{0,0,255}   
\definecolor{bestOther}{RGB}{255,0,0}  
\definecolor{btmgray}{gray}{0.9}       
\newcommand{\bestOurs}[1]{\textbf{\textcolor{bestOurs}{#1}}}
\newcommand{\bestOther}[1]{\textbf{\textcolor{bestOther}{#1}}}

\title{Geometric Characterisation and Structured Trajectory Surrogates for Clinical Dataset Condensation\thanks{An earlier version of this work appeared as a preprint.}}

\author{\name Pafue Christy Nganjimi 
      \email pafue.nganjimi@eng.ox.ac.uk \\
      \addr Department of Engineering Science\\
      University of Oxford, UK
      \AND
      \name Andrew Soltan 
      \email andrew.soltan@oncology.ox.ac.uk \\
      \addr Nuffield Department of Primary Care\\
      University of Oxford, UK
      \AND
      \name Danielle Belgrave 
      \email danielle.x.belgrave@gsk.com\\
      \addr GlaxoSmithKline\\
      London, UK
      \AND
      \name Lei Clifton 
      \email lei.clifton@phc.ox.ac.uk\\
      \addr Nuffield Department of Primary Care\\
      University of Oxford, UK
      \AND
      \name David Clifton 
      \email david.clifton@eng.ox.ac.uk\\
      \addr Department of Engineering Science\\
      University of Oxford, UK
      \AND
      \name Anshul Thakur \email anshul.thakur@eng.ox.ac.uk\\
      \addr Department of Engineering Science\\
      University of Oxford, UK
      }

\def\month{MM}  
\def\year{YYYY} 
\def\openreview{\url{https://openreview.net/forum?id=XXXX}} 

\begin{document}

\maketitle

\begin{abstract}
Dataset condensation constructs compact synthetic datasets that retain the training utility of large real-world datasets, enabling efficient model development and potentially supporting downstream research in governed domains such as healthcare. Trajectory matching (TM) is a widely used condensation approach that supervises synthetic data using changes in model parameters observed during training on real data, yet the structure of this supervision signal remains poorly understood. In this paper, we provide a geometric characterisation of trajectory matching, showing that a fixed synthetic dataset can only reproduce a limited span of such training-induced parameter changes. When the resulting supervision signal is spectrally broad, this creates a conditional representability bottleneck. Motivated by this mismatch, we propose Bézier Trajectory Matching (BTM), which replaces SGD trajectories with quadratic Bézier trajectory surrogates between initial and final model states. These surrogates are optimised to reduce average loss along the path while replacing broad SGD-derived supervision with a more structured, lower-rank signal that is better aligned with the optimisation constraints of a fixed synthetic dataset, and they substantially reduce trajectory storage. Experiments on five clinical datasets demonstrate that BTM consistently matches or improves upon standard trajectory matching, with the largest gains in low-prevalence and low-synthetic-budget settings. These results indicate that effective trajectory matching depends on structuring the supervision signal rather than reproducing stochastic optimisation paths.
\end{abstract}

\section{INTRODUCTION}
\label{sec:intro}

Modern machine learning systems increasingly rely on large-scale datasets \citep{pastorino2019benefits}, yet storing, sharing, and repeatedly training on such datasets remains computationally and operationally expensive. In clinical settings, this challenge is compounded by strict governance and access constraints on electronic health records (EHRs) \citep{thakur2024data, shabani2019gdpr}, which limit data sharing, restrict multi-centre collaboration, and can make external validation across diverse real-world cohorts more difficult. Dataset condensation \citep{wang2018dataset, zhao2021dataset} addresses this problem by constructing compact synthetic datasets that retain the training utility of the original data. In governed domains, such condensed datasets may support downstream methodological research and model development where direct data access is restricted. However, they do not by themselves provide formal privacy guarantees, and their safe deployment would require combining them with differential privacy.

\begin{figure*}[t]
    \centering
    \begin{subfigure}[t]{0.32\linewidth}
        \centering
        \includegraphics[width=\linewidth]{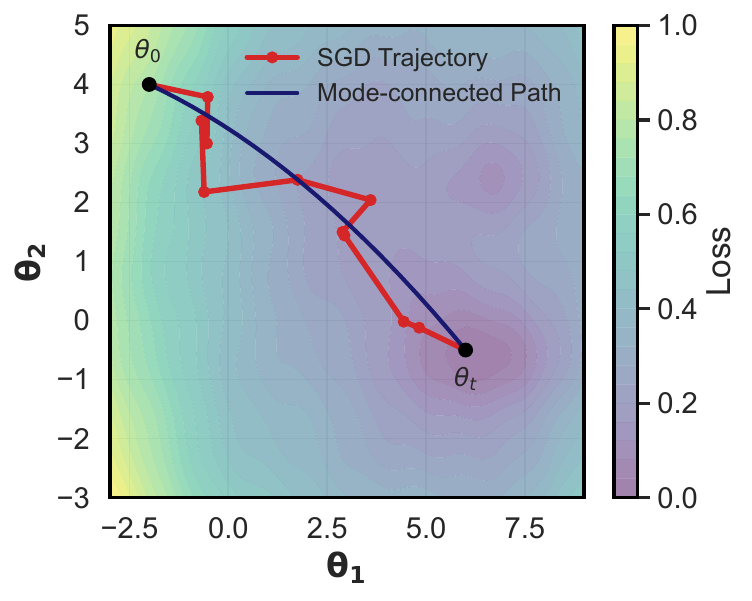}
        \subcaption{Parameter-space trajectories}
        \label{fig:tm_param_space}
    \end{subfigure}
    \hfill
    \begin{subfigure}[t]{0.32\linewidth}
        \centering
        \includegraphics[width=\linewidth]{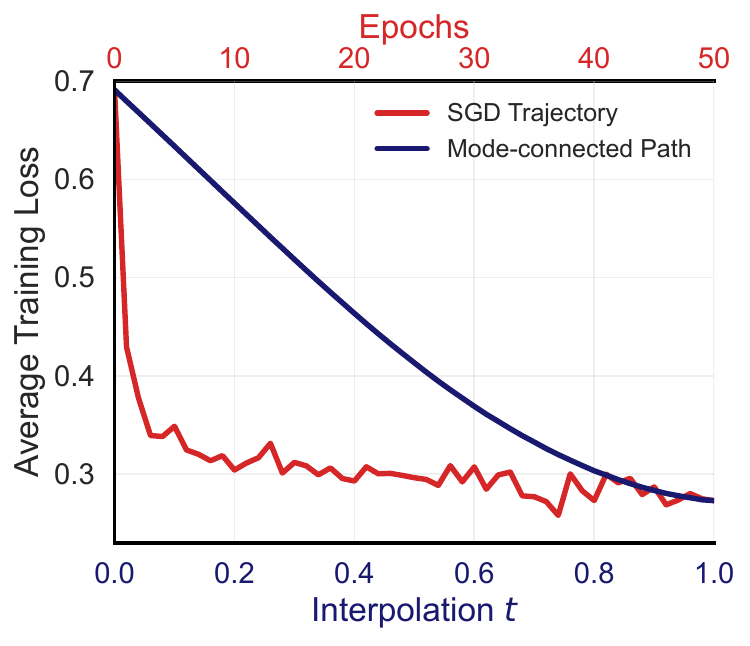}
        \subcaption{Training loss profiles}
        \label{fig:tm_loss_path}
    \end{subfigure}
    \hfill
    \begin{subfigure}[t]{0.32\linewidth}
        \centering
        \includegraphics[width=\linewidth]{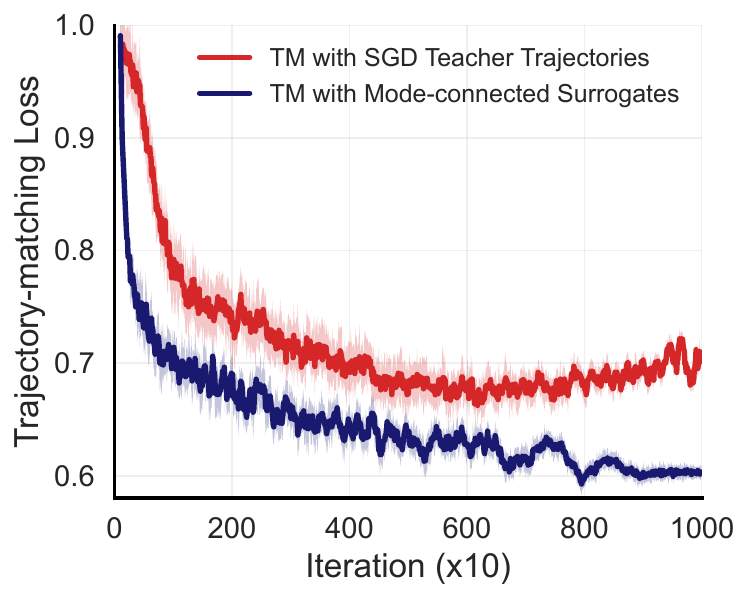}
        \subcaption{Trajectory-matching optimisation}
        \label{fig:tm_convergence}
    \end{subfigure}
    \caption{
    Illustrative comparison between raw SGD teacher trajectories and Bézier trajectory surrogates used in BTM.
    (a) Between the same initial and final model states, raw SGD can follow jagged, directionally variable paths, whereas a Bézier surrogate (mode connected path) provides a smoother low-complexity connection.
    (b) The two forms of supervision also differ functionally. SGD training exhibits irregular epoch-to-epoch loss behaviour, while the Bézier surrogate defines a smooth loss profile between the same endpoints.
    (c) In trajectory matching, replacing raw SGD segments with structured surrogate supervision can yield a more stable optimisation objective and faster convergence.
    }
    \label{fig:tm_vs_sgd}
\end{figure*}

Trajectory Matching (TM) \citep{cazenavette2022dataset,du2023ftd, guo2024datm} is a widely used approach to dataset condensation. Rather than matching gradients or feature statistics at a single point, TM supervises synthetic data using changes in model parameters observed during training on real data. By transferring optimisation dynamics into synthetic datasets, TM provides a direct mechanism for encoding task-relevant structure. Despite strong empirical performance, however, progress in TM has largely relied on empirical heuristics—such as trajectory regularisation, selecting which parts of the training trajectory to match, smoothing \citep{du2023ftd, guo2024datm, zhong2025towards}, and scalability improvements \citep{cui2023scaling}—without a principled understanding of the supervision signal being matched.

This leaves a basic question unresolved. Which aspects of real-data training trajectories can a fixed synthetic dataset actually reproduce? In TM, supervision is conveyed through changes in model parameters observed during training on real data, but the structure of this signal and its representational demands are not well understood. Without such an understanding, it remains unclear why trajectory matching performs well in some settings, degrades in others, or how its supervision should be structured to make condensation more effective.

In this work, we answer this question through a geometric characterisation of trajectory matching. We show that, for any fixed synthetic dataset, the trajectory matching residual is lower bounded by how well the parameter changes induced by training on real data can be approximated within the gradient span reachable from that dataset. This leads to a conditional representability bottleneck. When the resulting supervision signal is spectrally broad relative to the reachable span, some components necessarily lie outside what the synthetic dataset can reproduce, and a non-zero matching error remains unavoidable.

We then study this mismatch empirically for the supervision signal induced by standard SGD training. We find that the parameter changes used by TM often exhibit broad spectral structure, with non-negligible mass distributed across many weakly aligned directions. This effect appears across stochastic mini-batching, random initialisations, and distinct trained solutions, although we do not attempt to isolate the contribution of each source. The mismatch is especially pronounced in low-prevalence settings, where updates associated with the minority class are sparser and less consistently reinforced during training. As a result, standard TM may require a fixed synthetic dataset to reproduce a supervision signal that is broader than its reachable span, offering a plausible explanation for weaker performance in rare-event clinical prediction.

To mitigate this limitation, we propose \emph{Bézier Trajectory Matching} (BTM), which replaces discrete SGD trajectories with task-optimised quadratic Bézier surrogates connecting the initial and final model states. The approach is inspired by mode connectivity \citep{garipov2018loss, draxler2018essentially,thakur2025optimising}, but our setting differs from the classical case because one endpoint is a random initialisation rather than a second converged solution. A quadratic Bézier curve provides a simple nonlinear path between the endpoints through a single learned control point, allowing the surrogate to capture curvature without storing a long sequence of SGD iterates. In practice, this yields smoother start-to-solution paths whose loss decreases more consistently from initialisation to solution. Compared with standard SGD trajectories, the resulting supervision signal is more structured and has lower effective rank, making it better aligned with the representational constraints of a fixed synthetic dataset while also substantially reducing trajectory storage. Figure~\ref{fig:tm_vs_sgd} provides an intuitive comparison between raw SGD supervision and the structured Bézier surrogates used in BTM. The figure illustrates the smoother parameter-space geometry and loss profile of the surrogate paths, together with their effect on trajectory-matching optimisation.

We evaluate BTM on five real-world clinical datasets spanning tabular and time-series modalities. Across these tasks, BTM matches or outperforms prior baselines, with the largest gains in low-prevalence and low-synthetic-budget settings, where identifying positive cases reliably is especially important. 

The main contributions of this paper are summarised as follows:
\begin{itemize}[topsep=0pt, itemsep=1.25pt]
    \item \textsc{Geometric characterisation of trajectory matching:} We formulate TM for a fixed synthetic dataset as a constrained subspace-approximation problem and show that its residual is lower bounded by the projection error of real-data parameter changes onto the dataset's reachable gradient span. This reveals a conditional representability bottleneck when that span is limited.

    \item \textsc{Empirical analysis of standard trajectory supervision:} We show that supervision induced by standard SGD trajectories is often spectrally broad, especially in low-prevalence settings, helping to explain the mismatch between real-data training dynamics and what compact synthetic datasets can reproduce.

    \item \textsc{Bézier Trajectory Matching:} We introduce BTM, which replaces stored SGD trajectories with task-optimised quadratic Bézier surrogates, yielding a more structured supervision signal and a substantially more compact trajectory representation.
\end{itemize}

The remainder of the paper is organised as follows. Section~\ref{sec:rel_works} reviews related work, Section~\ref{sec:background} introduces background on dataset condensation and trajectory matching, Section~\ref{sec:geometric_tm} develops the geometric analysis, Section~\ref{sec:method} introduces BTM, Section~\ref{sec:exp} describes the experimental setup, and Section~\ref{sec:results} presents the empirical results.

\section{RELATED WORK}
\label{sec:rel_works}

\textsc{\textbf{Dataset Condensation:}} Dataset condensation methods differ primarily in the training signal used to optimise synthetic data. Gradient-based approaches match gradients induced by real and synthetic batches, with later extensions incorporating augmentation-aware objectives \citep{zhao2021dataset, zhao2021dsa}. Feature- and distribution-based methods instead align representations of real and synthetic data in embedding space, for example through feature alignment or distribution matching \citep{zhao2023dataset, wang2022cafe}. Kernel-based approaches provide more principled closed-form or function-space objectives, but are often less scalable to larger datasets and architectures \citep{nguyen2021dataset, nguyen2022dataset}. Other scalable or decoupled approaches prioritise efficiency, realism, or data reconstruction rather than explicitly modelling optimisation dynamics \citep{kanagavelu2024medsynth, yin2023sre2l, sun2023rded}.

\textbf{Trajectory-Based Condensation:} Trajectory matching \citep{cazenavette2022dataset} is particularly relevant to our work because it supervises synthetic data using parameter displacements observed along real training trajectories. Subsequent work has improved this paradigm through flatter or regularised trajectories \citep{du2023ftd}, stage- or difficulty-aware selection of which parts of the trajectory to match \citep{guo2024datm}, and memory-efficient scaling to larger datasets such as ImageNet \citep{cui2023scaling}. More recently, Zhong et al. \citep{zhong2025towards} proposed matching convexified trajectories to improve stability, convergence speed, and storage efficiency. Their convexified teacher trajectory provides smoother guidance, but it is not explicitly designed to remain in a low-loss region of the parameter space. As a result, in highly non-convex landscapes, smoothing by convexification may trade trajectory stability against fidelity to the original optimisation dynamics.

\textbf{Mode Connectivity:} Mode connectivity studies low-loss curves between trained models \citep{garipov2018loss, draxler2018essentially}, revealing substantial geometric structure in neural-network parameter space. Related work has shown that these curves can be exploited for ensembling and wider optima \citep{izmailov2018averaging}, and that their optimisation can benefit from explicitly accounting for weight-space symmetries \citep{tatro2020optimizing}. Recent work has also used related mode-connectivity ideas in clinical machine learning settings \citep{thakur2025optimising}. Our setting differs from the classical mode-connectivity case because one endpoint is a random initialisation rather than a second converged solution. We therefore use the same geometric machinery to learn a smoother descent surrogate, rather than to claim a classical low-loss connection between two minima.

\textbf{Positioning of Our Work:} This paper sits at the intersection of trajectory-based condensation and geometric modelling of weight-space curves. Rather than proposing another heuristic modification of stored SGD trajectories, we study the representational structure of the supervision signal induced by trajectory matching. This yields a geometric account of when trajectory supervision is limited by the reachable gradient span of a fixed synthetic dataset, and motivates Bézier trajectory surrogates as a more structured alternative to raw SGD trajectories. In contrast to prior trajectory-based methods, including recent approaches based on convexified teacher trajectories \citep{zhong2025towards}, our work asks which aspects of trajectory-induced supervision are reproducible by a fixed synthetic dataset. Whereas convexified trajectories primarily smooth teacher dynamics for improved stability and storage efficiency, BTM replaces each teacher trajectory with a task-optimised quadratic Bézier surrogate motivated by this representability bottleneck, yielding a supervision signal that is more structured and better aligned with what a fixed synthetic dataset can reproduce.

\section{Background}
\label{sec:background}

\subsection{Dataset Condensation Problem}

Dataset condensation aims to represent the task-relevant training signal of a large dataset \(\mathcal{D}\) with a much smaller synthetic dataset \(\tilde{\mathcal{D}}\), such that models trained on \(\tilde{\mathcal{D}}\) achieve comparable generalisation performance.

Formally, let
\[
\mathcal{D}=\{(\mathbf{x}_i,\mathbf{y}_i)\}_{i=1}^{|\mathcal{D}|},
\qquad
\tilde{\mathcal{D}}=\{(\tilde{\mathbf{x}}_j,\tilde{\mathbf{y}}_j)\}_{j=1}^{|\tilde{\mathcal{D}}|},
\qquad
|\tilde{\mathcal{D}}| \ll |\mathcal{D}|,
\]
and let \(\boldsymbol{\theta} \in \Theta = \mathbb{R}^p\) denote model parameters. Given an initialisation \(\hat{\boldsymbol{\theta}}_0 \sim P\) and a fixed training procedure \(\hat{\boldsymbol{\theta}}_N = \mathrm{Train}(\hat{\boldsymbol{\theta}}_0,\tilde{\mathcal{D}})\), dataset condensation seeks a synthetic dataset satisfying
\begin{equation}
\tilde{\mathcal{D}}^\star
\in
\arg\min_{\tilde{\mathcal{D}}}\;
\mathbb{E}_{\hat{\boldsymbol{\theta}}_0\sim P}
\big[\mathcal{L}_{\mathrm{val}}(\hat{\boldsymbol{\theta}}_N)\big]
\quad
\text{s.t.}
\quad
\hat{\boldsymbol{\theta}}_N = \mathrm{Train}(\hat{\boldsymbol{\theta}}_0,\tilde{\mathcal{D}}),
\end{equation}
where \(\mathcal{L}_{\mathrm{val}}\) denotes a validation loss. The challenge is that \(\tilde{\mathcal{D}}\) must induce training dynamics that generalise well despite being orders of magnitude smaller than \(\mathcal{D}\).

\subsection{Trajectory Matching}
\label{subsec:tm_background}

A prominent approach to dataset condensation is trajectory matching, which uses a teacher optimisation trajectory as supervision for learning synthetic data. Let \(\{\boldsymbol{\theta}_{\tau}\}_{\tau=0}^{T} \subset \Theta\) denote the parameter sequence obtained by training a teacher model on \(\mathcal{D}\). We consider a collection of index pairs \(\{(s_k,e_k)\}_{k=1}^K\), with \(0 \le s_k < e_k \le T\), defining the endpoints of \(K\) trajectory segments. Each segment induces a teacher displacement
\[
\boldsymbol{\Delta}_k := \boldsymbol{\theta}_{e_k} - \boldsymbol{\theta}_{s_k}.
\]

To match segment \(k\), a student model is initialised at \(\hat{\boldsymbol{\theta}}_0 := \boldsymbol{\theta}_{s_k}\) and trained on \(\tilde{\mathcal{D}}\) for \(N\) inner steps. Under the first-order, no-momentum update used in standard trajectory matching implementations, the student evolves as
\begin{equation}
\label{eq:student-update}
\hat{\boldsymbol{\theta}}_{n+1}
=
\hat{\boldsymbol{\theta}}_{n}
- \eta_s\, \mathbf{g}_n,
\qquad
\mathbf{g}_n
=
\nabla_{\boldsymbol{\theta}}
\ell(\hat{\boldsymbol{\theta}}_{n}; B_n),
\quad n=0,\dots,N-1,
\end{equation}
where \(\eta_s\) is the student learning rate, \(B_n \subset \tilde{\mathcal{D}}\) is a synthetic mini-batch, and \(\ell\) denotes the mini-batch loss. This yields the student displacement
\[
\boldsymbol{\delta}_k
:=
\hat{\boldsymbol{\theta}}_{N} - \boldsymbol{\theta}_{s_k}.
\]

The standard per-segment trajectory matching objective is
\begin{equation}
\label{eq:tm_loss}
\mathcal{L}_{\mathrm{TM}}(s_k,e_k;\tilde{\mathcal{D}})
:=
\left\|
\hat{\boldsymbol{\theta}}_{N} - \boldsymbol{\theta}_{e_k}
\right\|_2^2
=
\left\|
\boldsymbol{\delta}_k - \boldsymbol{\Delta}_k
\right\|_2^2.
\end{equation}

Trajectory matching can therefore be viewed as learning a synthetic dataset whose induced optimisation steps reproduce teacher displacements in parameter space. In the next section, we introduce our geometric analysis of this objective and show that the student can only realise displacements lying in a restricted span determined by the gradients accessible from the synthetic data.

\section{A Geometric View of Trajectory Matching}
\label{sec:geometric_tm}

This section develops a geometric analysis of trajectory matching. For a fixed synthetic dataset and inner-loop optimiser, the student cannot realise arbitrary teacher displacements; it can move only along directions generated by gradients accessible from \(\tilde{\mathcal{D}}\). This induces a constrained subspace of parameter space that determines which teacher updates are representable.

Two consequences follow. For an individual segment, the trajectory matching loss is lower bounded by the component of the teacher displacement lying outside the student’s reachable span. Across segments, this leads to a rank-based bottleneck showing that broad-spectrum teacher supervision cannot, in general, be reproduced within a low-dimensional reachable span. These results explain why raw SGD trajectories can be difficult to match with compact synthetic datasets and motivate structured surrogate supervision.

\subsection{Reachable Gradient Span}
\label{subsec:reachable_span}

To formalise the representational constraint imposed by the synthetic inner loop, we define the reachable gradient span.

\begin{definition}[Reachable gradient span]
Let \(\{\mathbf{g}_n\}_{n=0}^{N-1}\) be the gradients in \eqref{eq:student-update} for segment \(k\). The reachable gradient span for segment \(k\) is
\begin{equation}
\label{eq:grad_span}
\mathcal{G}_k
=
\mathrm{span}\{\mathbf{g}_0,\dots,\mathbf{g}_{N-1}\}
\subset \mathbb{R}^p.
\end{equation}
\end{definition}

Under the first-order, no-momentum update in \eqref{eq:student-update}, the student displacement satisfies
\[
\boldsymbol{\delta}_k = -\eta_s \sum_{n=0}^{N-1} \mathbf{g}_n,
\]
and therefore \(\boldsymbol{\delta}_k \in \mathcal{G}_k\). For a fixed segment, trajectory matching is therefore not an unconstrained regression problem in parameter space. The student can approximate the teacher displacement \(\boldsymbol{\Delta}_k\) only by vectors lying in the subspace \(\mathcal{G}_k\).

\subsection{Projection Lower Bound}
\label{subsec:projection_lb}

The subspace constraint above leads directly to a geometric lower bound on the trajectory matching objective.

\begin{theorem}[Projection lower bound]
\label{thm:projection-lb}
Fix a segment \(k\) with endpoint indices \((s_k,e_k)\), let \(P_{\mathcal{G}_k}\) denote the orthogonal projector onto \(\mathcal{G}_k\), and let \(I\) denote the identity operator on \(\mathbb{R}^p\). Then
\begin{equation}
\mathcal{L}_{\mathrm{TM}}(s_k,e_k;\tilde{\mathcal{D}})
\;\ge\;
\big\|
(I - P_{\mathcal{G}_k}) \boldsymbol{\Delta}_k
\big\|_2^2.
\end{equation}
Moreover, the minimiser of
\[
\min_{\mathbf{v}\in\mathcal{G}_k}\|\mathbf{v}-\boldsymbol{\Delta}_k\|_2^2
\]
is \(P_{\mathcal{G}_k}\boldsymbol{\Delta}_k\). Thus, the bound is tight as a best-in-subspace approximation statement, although a fixed synthetic dataset need not realise this minimiser.
\end{theorem}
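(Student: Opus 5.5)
The plan is to reduce the statement to the standard best-approximation property of orthogonal projections in \(\mathbb{R}^p\). The structural input is the observation in Section~\ref{subsec:reachable_span} that under the no-momentum update \eqref{eq:student-update}, \(\boldsymbol{\delta}_k = -\eta_s\sum_{n=0}^{N-1}\mathbf{g}_n \in \mathcal{G}_k\). Once \(\boldsymbol{\delta}_k\) is known to lie in \(\mathcal{G}_k\), the loss \(\mathcal{L}_{\mathrm{TM}}(s_k,e_k;\tilde{\mathcal{D}}) = \|\boldsymbol{\delta}_k-\boldsymbol{\Delta}_k\|_2^2\) is bounded below by \(\min_{\mathbf{v}\in\mathcal{G}_k}\|\mathbf{v}-\boldsymbol{\Delta}_k\|_2^2\). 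It therefore suffices to prove the second claim: this minimum is attained at \(P_{\mathcal{G}_k}\boldsymbol{\Delta}_k\) and equals \(\|(I-P_{\mathcal{G}_k})\boldsymbol{\Delta}_k\|_2^2\).

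For the second claim, decompose \(\boldsymbol{\Delta}_k = P_{\mathcal{G}_k}\boldsymbol{\Delta}_k + (I-P_{\mathcal{G}_k})\boldsymbol{\Delta}_k\). By the definition of an orthogonal projector, the second term is orthogonal to every vector in \(\mathcal{G}_k\). For any \(\mathbf{v}\in\mathcal{G}_k\), the vector \(\mathbf{v}-P_{\mathcal{G}_k}\boldsymbol{\Delta}_k\) also lies in \(\mathcal{G}_k\), so the Pythagorean identity gives
\[
\|\mathbf{v}-\boldsymbol{\Delta}_k\|_2^2 = \|\mathbf{v}-P_{\mathcal{G}_k}\boldsymbol{\Delta}_k\|_2^2 + \|(I-P_{\mathcal{G}_k})\boldsymbol{\Delta}_k\|_2^2 .
\]
The right-hand side is at least \(\|(I-P_{\mathcal{G}_k})\boldsymbol{\Delta}_k\|_2^2\), with equality if and only if \(\mathbf{v}=P_{\mathcal{G}_k}\boldsymbol{\Delta}_k\). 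This establishes both the value of the minimum and the uniqueness of the minimiser. Applying the identity with \(\mathbf{v}=\boldsymbol{\delta}_k\) then yields the lower bound in the theorem.

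I will close with the tightness remark. The bound is attained exactly when \(\boldsymbol{\delta}_k = P_{\mathcal{G}_k}\boldsymbol{\Delta}_k\). However, \(\boldsymbol{\delta}_k\) is constrained to be \(-\eta_s\) times the plain sum of the gradients, and \(\mathcal{G}_k\) itself depends on \(\tilde{\mathcal{D}}\), so this equality need not hold. The tightness statement is therefore only a best-in-subspace statement, as the theorem says.

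There is no serious obstacle; the only care needed is bookkeeping. The span \(\mathcal{G}_k\) is taken for the fixed \(\tilde{\mathcal{D}}\) and the realised mini-batch sequence \(B_0,\dots,B_{N-1}\), so the bound holds pathwise for each realisation of the inner loop. The argument also requires \(\mathcal{G}_k\) to be a closed subspace so that \(P_{\mathcal{G}_k}\) is well defined, which is automatic because \(\mathcal{G}_k\) is finite-dimensional in \(\mathbb{R}^p\).
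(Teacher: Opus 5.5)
Your proposal is correct and follows essentially the same route as the paper's proof: you place \(\boldsymbol{\delta}_k\) in \(\mathcal{G}_k\) via the no-momentum update, decompose \(\boldsymbol{\Delta}_k\) orthogonally with respect to \(\mathcal{G}_k\), and apply the Pythagorean identity to obtain both the lower bound and the unique minimiser \(P_{\mathcal{G}_k}\boldsymbol{\Delta}_k\). Your extra remarks, that the bound holds for each realised mini-batch sequence and that the projector is well defined, are harmless refinements the paper leaves implicit.
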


\begin{proof}
Section~\ref{sec:proof-projection-lb} of the appendix provides the complete proof.
\end{proof}

\begin{remark}[Best-in-span versus realised displacement]
Theorem~\ref{thm:projection-lb} characterises the best approximation to \(\boldsymbol{\Delta}_k\) available within the reachable span \(\mathcal{G}_k\). The realised student displacement
\[
\boldsymbol{\delta}_k = -\eta_s\sum_{n=0}^{N-1}\mathbf{g}_n
\]
need not coincide with this optimum for a fixed synthetic dataset. The result should therefore be interpreted as a geometric lower bound rather than as an attainability guarantee.
\end{remark}

Theorem~\ref{thm:projection-lb} shows that any component of the teacher displacement orthogonal to the student’s reachable span produces irreducible approximation error. Mismatch is therefore unavoidable whenever the teacher supervision places substantial mass outside the directions that the synthetic inner loop can generate.

\subsection{A Rank Bottleneck for Shared Synthetic Supervision}
\label{subsec:rank_bottleneck}

The single-segment view above extends naturally to the full collection of teacher displacements supervised by a shared synthetic dataset. Let
\begin{equation}
\label{eq:teacher_disp}
\mathbf{A}
:=
[\boldsymbol{\Delta}_1,\dots,\boldsymbol{\Delta}_K]
\in \mathbb{R}^{p\times K},
\qquad
r_{\mathbf{A}} := \operatorname{rank}(\mathbf{A}),
\end{equation}
denote the matrix of teacher displacements, and define the global reachable span induced by \(\tilde{\mathcal{D}}\) as
\begin{equation}
\label{eq:global_span}
\mathcal{G}
:=
\mathrm{span}\{\mathbf{g}^{(k)}_n : k=1,\dots,K,\; n=0,\dots,N-1\},
\qquad
r := \dim(\mathcal{G}).
\end{equation}

For a fixed synthetic dataset, trajectory matching can therefore be viewed as approximating the displacement matrix \(\mathbf{A}\) within an \(r\)-dimensional subspace \(\mathcal{G}\). This yields a global representability bottleneck.

\begin{theorem}[Rank bottleneck under an \(r\)-dimensional reachable span]
\label{thm:rank-bottleneck}
Let \(\mathbf{A} = [\boldsymbol{\Delta}_1,\dots,\boldsymbol{\Delta}_K] \in \mathbb{R}^{p\times K}\) be the teacher displacement matrix, let \(\mathcal{G} \subset \mathbb{R}^p\) be a subspace of dimension \(r\) with orthogonal projector \(P_{\mathcal{G}}\), let \(I\) denote the identity operator on \(\mathbb{R}^p\), and define \(\mathcal{L}_{\mathrm{TM}}^{(k)} := \mathcal{L}_{\mathrm{TM}}(s_k,e_k;\tilde{\mathcal{D}})\). Then
\begin{equation}
\frac{1}{K}\sum_{k=1}^K \mathcal{L}_{\mathrm{TM}}^{(k)}
\;\ge\;
\frac{1}{K}
\big\|
(I - P_{\mathcal{G}})\mathbf{A}
\big\|_F^2.
\end{equation}
Moreover, among all \(r\)-dimensional subspaces \(\mathcal{G} \subset \mathbb{R}^p\), the minimum residual is
\begin{equation}
\min_{\dim(\mathcal{G})=r}
\frac{1}{K}
\big\|
(I - P_{\mathcal{G}})\mathbf{A}
\big\|_F^2
=
\frac{1}{K}
\sum_{j=r+1}^{r_{\mathbf{A}}}
\sigma_j(\mathbf{A})^2,
\end{equation}
where \(\sigma_j(\mathbf{A})\) denotes the \(j\)-th singular value of \(\mathbf{A}\).
\end{theorem}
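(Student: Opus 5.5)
The proof has two parts: a column-wise reduction to Theorem~\ref{thm:projection-lb}, and the Eckart--Young--Mirsky theorem for the optimisation over subspaces.

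\textbf{First inequality.} For each segment \(k\), the student displacement satisfies \(\boldsymbol{\delta}_k = -\eta_s\sum_n \mathbf{g}^{(k)}_n\). This lies in \(\mathcal{G}_k\), and \(\mathcal{G}_k \subseteq \mathcal{G}\) by definition \eqref{eq:global_span}. Because \(\boldsymbol{\delta}_k \in \mathcal{G}\), the same argument as in Theorem~\ref{thm:projection-lb}, with \(\mathcal{G}\) in place of \(\mathcal{G}_k\), gives
\[
\mathcal{L}_{\mathrm{TM}}^{(k)} = \|\boldsymbol{\delta}_k-\boldsymbol{\Delta}_k\|_2^2 \ge \min_{\mathbf{v}\in\mathcal{G}}\|\mathbf{v}-\boldsymbol{\Delta}_k\|_2^2 = \|(I-P_{\mathcal{G}})\boldsymbol{\Delta}_k\|_2^2 .
\]
This is just the Pythagorean decomposition \(\boldsymbol{\Delta}_k-\mathbf{v} = (P_{\mathcal{G}}\boldsymbol{\Delta}_k-\mathbf{v}) + (I-P_{\mathcal{G}})\boldsymbol{\Delta}_k\), whose two terms are orthogonal. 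Alternatively, one can apply Theorem~\ref{thm:projection-lb} directly and use \(\|(I-P_{\mathcal{G}_k})\mathbf{x}\|\ge\|(I-P_{\mathcal{G}})\mathbf{x}\|\), which holds for nested subspaces. Summing over \(k\) and using \(\|\mathbf{M}\|_F^2=\sum_k\|\mathbf{M}\mathbf{e}_k\|_2^2\) column-wise gives \(\sum_k\|(I-P_{\mathcal{G}})\boldsymbol{\Delta}_k\|^2 = \|(I-P_{\mathcal{G}})\mathbf{A}\|_F^2\). Dividing by \(K\) yields the first claim.

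One caveat concerns the hypothesis. The statement takes \(\mathcal{G}\) to be an arbitrary \(r\)-dimensional subspace, but the inequality needs \(\boldsymbol{\delta}_k\in\mathcal{G}\). I would note explicitly that \(\mathcal{G}\) is the global reachable span (or any subspace containing it), as in \eqref{eq:global_span}.

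\textbf{Minimisation over subspaces.} Let \(\mathbf{Q}\in\mathbb{R}^{p\times r}\) have orthonormal columns spanning \(\mathcal{G}\). Then
\[
\|(I-P_{\mathcal{G}})\mathbf{A}\|_F^2 = \|\mathbf{A}\|_F^2 - \operatorname{tr}(\mathbf{Q}^\top \mathbf{A}\mathbf{A}^\top \mathbf{Q}),
\]
so minimising the residual is the same as maximising \(\operatorname{tr}(\mathbf{Q}^\top\mathbf{A}\mathbf{A}^\top\mathbf{Q})\) over orthonormal \(\mathbf{Q}\). By Ky Fan's maximum principle, this maximum equals \(\sum_{j=1}^{r}\lambda_j(\mathbf{A}\mathbf{A}^\top)=\sum_{j\le r}\sigma_j(\mathbf{A})^2\). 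It is attained by taking the top \(r\) left singular vectors of \(\mathbf{A}\), which can be padded arbitrarily if \(r>r_{\mathbf{A}}\). Since \(\|\mathbf{A}\|_F^2=\sum_{j\le r_{\mathbf{A}}}\sigma_j^2\), the minimum residual is \(\sum_{j=r+1}^{r_{\mathbf{A}}}\sigma_j^2\). This sum is interpreted as zero when \(r\ge r_{\mathbf{A}}\), and the edge case should be noted. Equivalently, one can invoke Eckart--Young--Mirsky: \(P_{\mathcal{G}}\mathbf{A}\) has rank at most \(r\), so the residual is at least the optimal rank-\(r\) approximation error, and this error is achieved by the truncated SVD.

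\textbf{Expected obstacle.} The only nontrivial ingredient is the Ky Fan / Eckart--Young step. I would cite it as a standard result rather than reprove it. If a self-contained argument is needed, I would give a short one via the Courant--Fischer min-max characterisation, or via the doubly-stochastic-weights argument on \(\operatorname{tr}(\mathbf{Q}^\top\mathbf{U}\boldsymbol{\Sigma}^2\mathbf{U}^\top\mathbf{Q})=\sum_j\sigma_j^2 w_j\), where \(w_j\in[0,1]\) and \(\sum_j w_j= r\).
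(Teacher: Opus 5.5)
Your proof is correct and takes the paper's route: \(\boldsymbol{\delta}_k\in\mathcal{G}_k\subseteq\mathcal{G}\) plus the Pythagorean bound, then the column-wise Frobenius identity, then Eckart--Young--Mirsky. Your Ky Fan trace formulation is an equivalent form of that last step, and your caveat is apt: the paper's proof also takes \(\mathcal{G}\) to be the global reachable span defined in \eqref{eq:global_span}, so the first inequality holds for that \(\mathcal{G}\) (or any subspace containing it), not for an arbitrary \(r\)-dimensional subspace as the statement literally says.
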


\begin{proof}
Section~\ref{sec:proof-rank-bottleneck} of the appendix provides the complete proof.
\end{proof}

\begin{remark}
The displacements \(\{\boldsymbol{\Delta}_k\}_{k=1}^K\) may be drawn from one or multiple teacher trajectories~\citep{cazenavette2022dataset}. The analysis depends only on the resulting collection of displacement vectors and applies in either case.
\end{remark}

\begin{remark}[Conditional nature of the bottleneck]
Theorem~\ref{thm:rank-bottleneck} is conditional on the effective dimension \(r=\dim(\mathcal{G})\) induced by a fixed synthetic dataset, inner-loop horizon, and model class. It does not assert a universal, architecture-independent upper bound on \(r\). Instead, it quantifies the residual once an \(r\)-dimensional reachable span is given.
\end{remark}

Theorem~\ref{thm:rank-bottleneck} identifies a fundamental limitation of trajectory matching under constrained synthetic supervision. When substantial spectral mass of \(\mathbf{A}\) lies outside the effective reachable span, a positive residual remains unavoidable. The difficulty of trajectory matching is therefore governed not only by the nominal rank of \(\mathbf{A}\), but more generally by its spectrum and effective rank relative to the student’s reachable span.

These results suggest that effective trajectory supervision should be concentrated in a small number of coherent directions. The next section introduces Bézier Trajectory Matching, which is designed to impose exactly this kind of structured supervision.

\section{Bézier Trajectory Matching}
\label{sec:method}

The rank bottleneck above suggests that effective supervision should preserve functionally meaningful progress while avoiding diffuse high-rank variability. \emph{Bézier Trajectory Matching} (BTM) addresses this limitation by replacing stochastic optimisation trajectories with learnable quadratic Bézier curves that act as structured low-rank surrogates. These surrogate paths are optimised to reduce average loss along the path, suppress unnecessary high-loss excursions, and empirically encourage a smoother, typically decreasing expected loss profile from initialisation to solution. The induced displacement directions are therefore confined to a low-dimensional subspace, yielding supervision that is better aligned with the constraints of student optimisation.

The construction of the surrogate trajectories is described first. Their displacement structure and the resulting supervision are then analysed.

\subsection{Bézier Surrogate Trajectories}

Surrogate trajectories are constructed between model initialisations and their corresponding trained solutions using quadratic Bézier curves. Let \(\bigl\{(\boldsymbol{\theta}^{(m)}_0,\boldsymbol{\theta}^{(m)}_T)\bigr\}_{m=1}^{M}\) denote initialisation--solution pairs obtained from real-data SGD trajectories. For each pair, define a quadratic Bézier trajectory
\begin{equation}
\label{eq:btm_bezier}
\Phi^{(m)}(t)
=
(1-t)^2 \boldsymbol{\theta}^{(m)}_0
+
2t(1-t)\boldsymbol{\phi}^{(m)}
+
t^2 \boldsymbol{\theta}^{(m)}_T,
\qquad t \in [0,1],
\end{equation}
where \(\boldsymbol{\phi}^{(m)} \in \mathbb{R}^p\) is a learnable control point. The straight line is recovered by the special choice
\[
\boldsymbol{\phi}^{(m)} = \tfrac12\bigl(\boldsymbol{\theta}^{(m)}_0 + \boldsymbol{\theta}^{(m)}_T\bigr),
\]
so the quadratic family strictly contains the linear interpolation baseline.

Each control point is optimised to reduce the average loss along the path,
\begin{equation}
\label{eq:btm_phi}
\boldsymbol{\phi}^{(m)\star}
\in
\operatorname*{arg\,min}_{\boldsymbol{\phi}}
\int_0^1
\mathcal{L}\bigl(\Phi_{\boldsymbol{\phi}}^{(m)}(t); \mathcal{D}\bigr)\,dt,
\end{equation}
where \(\mathcal{L}(\boldsymbol{\theta};\mathcal{D})\) denotes the dataset-level training loss evaluated on \(\mathcal{D}\).

In practice, the integral is approximated using discrete samples of \(t \in [0,1]\). This optimisation is related to mode connectivity~\citep{garipov2018loss}, but differs in an important respect. Because one endpoint is the random initialisation, the curve is not interpreted as a classical mode connection between two low-loss modes. Instead, minimising the average loss along the path encourages a smoother curve with lower average loss and, empirically, a typically decreasing expected loss profile from initialisation to solution.

Each surrogate trajectory is specified by three parameter vectors, \(\boldsymbol{\theta}^{(m)}_0\), \(\boldsymbol{\phi}^{(m)\star}\), and \(\boldsymbol{\theta}^{(m)}_T\). Storing \(M\) surrogates therefore requires \(\mathcal{O}(3M)\) parameter vectors. Standard trajectory matching, by contrast, stores \(T\) checkpoints per trajectory and therefore requires \(\mathcal{O}(TM)\) parameter vectors. The surrogate representation thus yields a substantial reduction in memory when \(T \gg 1\).

\subsection{Low-Rank Supervision from Bézier Surrogates}

The surrogate trajectories above induce the displacement supervision used in trajectory matching. For a surrogate trajectory \(\Phi^{(m)}\), define segment displacements
\begin{equation}
\label{eq:bez-displacement}
\boldsymbol{\Delta}^{(m)}(t_s,t_e)
=
\Phi^{(m)}(t_e)-\Phi^{(m)}(t_s),
\qquad 0 \le t_s < t_e \le 1.
\end{equation}

Collecting such displacements across sampled segments yields a surrogate displacement matrix analogous to \(\mathbf{A}\) in Section~\ref{sec:geometric_tm}. The key structural property is that all segment displacements from a single quadratic Bézier trajectory lie in a two-dimensional subspace. For notational simplicity, the surrogate index \(m\) is suppressed in the theorem statement.

\begin{theorem}[Low-dimensional displacement structure of quadratic Bézier surrogates]
\label{thm:bezier-span-method}
Let \(\Phi(t)\) be a quadratic Bézier trajectory as defined in \eqref{eq:btm_bezier}. Then for any \(0 \le t_s < t_e \le 1\),
\begin{equation}
\label{eq:bezier-displacement-expansion}
\Phi(t_e)-\Phi(t_s)
=
(t_e-t_s)(\boldsymbol{\theta}_T-\boldsymbol{\theta}_0)
+
(t_e-t_s)(1-t_s-t_e)\bigl(2\boldsymbol{\phi}-\boldsymbol{\theta}_0-\boldsymbol{\theta}_T\bigr).
\end{equation}
Consequently, every segment displacement lies in the subspace
\[
\mathrm{span}\Bigl\{
\boldsymbol{\theta}_T-\boldsymbol{\theta}_0,\;
2\boldsymbol{\phi}-\boldsymbol{\theta}_0-\boldsymbol{\theta}_T
\Bigr\},
\]
and therefore
\[
V_{\Phi}
:=
\mathrm{span}
\bigl\{
\Phi(t_e)-\Phi(t_s)
:\; 0 \le t_s < t_e \le 1
\bigr\}
\]
satisfies \(\dim(V_{\Phi}) \le 2\). In particular, any displacement matrix constructed from segments of a single surrogate trajectory has rank at most \(2\).
\end{theorem}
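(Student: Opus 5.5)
The plan is a direct algebraic computation, followed by a dimension count. First I will rewrite the quadratic Bézier curve in \eqref{eq:btm_bezier} in monomial form. Expanding $(1-t)^2 = 1-2t+t^2$ and $2t(1-t)=2t-2t^2$ and collecting powers of $t$ gives
\begin{equation*}
\Phi(t) = \boldsymbol{\theta}_0 + 2t(\boldsymbol{\phi}-\boldsymbol{\theta}_0) + t^2(\boldsymbol{\theta}_0 - 2\boldsymbol{\phi} + \boldsymbol{\theta}_T).
\end{equation*}
It is convenient to introduce the two fixed vectors $\mathbf{u} := \boldsymbol{\theta}_T-\boldsymbol{\theta}_0$ and $\mathbf{w} := 2\boldsymbol{\phi}-\boldsymbol{\theta}_0-\boldsymbol{\theta}_T$. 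In these vectors $2(\boldsymbol{\phi}-\boldsymbol{\theta}_0) = \mathbf{u}+\mathbf{w}$ and $\boldsymbol{\theta}_0-2\boldsymbol{\phi}+\boldsymbol{\theta}_T = -\mathbf{w}$, so that $\Phi(t) = \boldsymbol{\theta}_0 + t\,\mathbf{u} + t(1-t)\,\mathbf{w}$.

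Next I will take the difference at $t_e$ and $t_s$. The constant term cancels. The linear term contributes $(t_e-t_s)\mathbf{u}$. The remaining term contributes $\bigl[t_e - t_e^2 - t_s + t_s^2\bigr]\mathbf{w}$. Factoring gives $t_e-t_s - (t_e-t_s)(t_e+t_s) = (t_e-t_s)(1-t_s-t_e)$, which is exactly \eqref{eq:bezier-displacement-expansion}. This factorisation, a difference of squares, is the only step where care is needed. I do not expect a genuine obstacle; the main risk is a sign or bookkeeping slip, so I will check the identity at $t_s=0$, $t_e=1$, where it should reduce to $\boldsymbol{\theta}_T-\boldsymbol{\theta}_0$.

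The span statements then follow immediately. Each displacement is a linear combination of $\mathbf{u}$ and $\mathbf{w}$ with scalar coefficients $(t_e-t_s)$ and $(t_e-t_s)(1-t_s-t_e)$. Hence every generator of $V_\Phi$ lies in $\mathrm{span}\{\mathbf{u},\mathbf{w}\}$, and therefore so does $V_\Phi$. A span of two vectors has dimension at most $2$; it is smaller when $\mathbf{u},\mathbf{w}$ are linearly dependent, for instance for the straight-line control point, where $\mathbf{w}=0$.

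Finally, for any displacement matrix whose columns are segment displacements of a single surrogate, the column space is contained in $V_\Phi$. Its rank is therefore at most $\dim V_\Phi \le 2$.
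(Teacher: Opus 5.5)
Your proposal is correct and follows essentially the same route as the paper's proof: rewrite $\Phi$ in monomial form, take the difference at $t_e$ and $t_s$, and read off the span, dimension, and rank bounds. The only difference is cosmetic. You work in the basis $\{\boldsymbol{\theta}_T-\boldsymbol{\theta}_0,\,2\boldsymbol{\phi}-\boldsymbol{\theta}_0-\boldsymbol{\theta}_T\}$ and so verify the displayed displacement identity explicitly, whereas the paper factors the displacement in the equivalent basis $\{\boldsymbol{\phi}-\boldsymbol{\theta}_0,\,\boldsymbol{\theta}_0-2\boldsymbol{\phi}+\boldsymbol{\theta}_T\}$, which spans the same subspace.
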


\begin{proof}
Section~\ref{sec:proof-bezier-span-method} of the appendix provides the complete proof.
\end{proof}

For \(M\) surrogate trajectories, the combined supervision therefore has nominal rank at most \(2M\). Empirically, however, the effective rank is substantially lower and typically scales closer to \(M\) than to \(2M\) (Figure~\ref{fig:velocity-span-grid}). BTM therefore replaces diffuse high-rank supervision with a controlled low-rank alternative that is better aligned with the constraints of student optimisation. Rather than matching a broad spectrum of weakly aligned directions, the synthetic dataset needs only to reproduce a small number of coherent displacement directions that capture the dominant structure of optimisation.

\begin{figure*}[t]
    \centering
    \begin{subfigure}{0.325\textwidth}
        \centering
        \includegraphics[width=\linewidth]{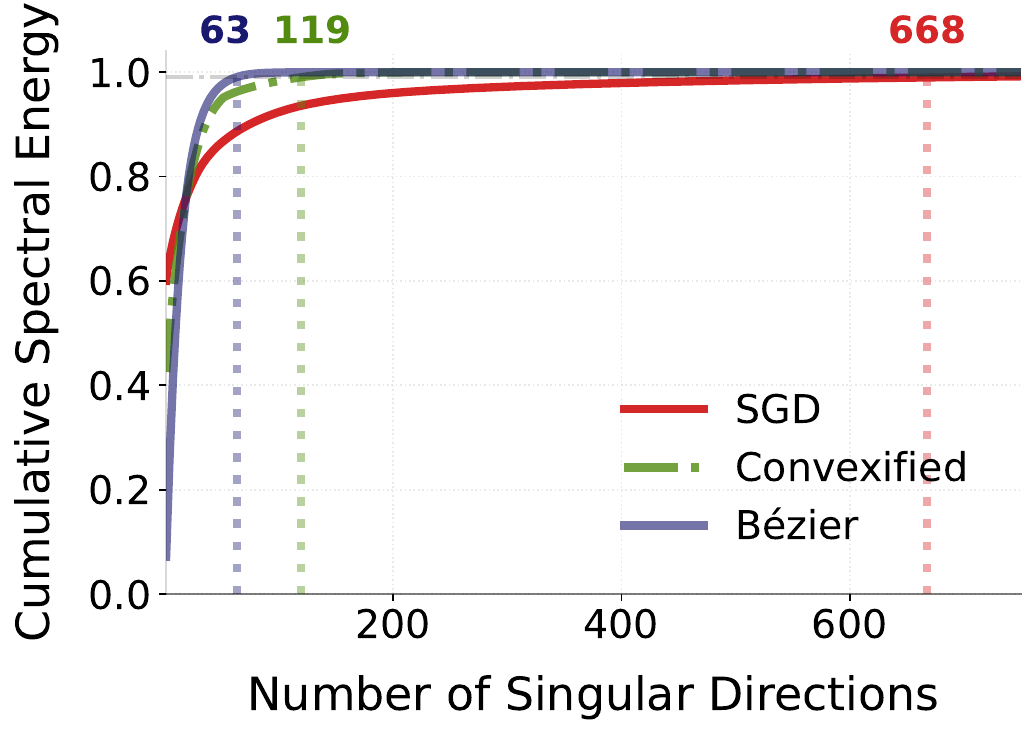}
        \caption{Portsmouth (NHS)}
    \end{subfigure}
    \hfill
    \begin{subfigure}{0.325\textwidth}
        \centering
        \includegraphics[width=\linewidth]{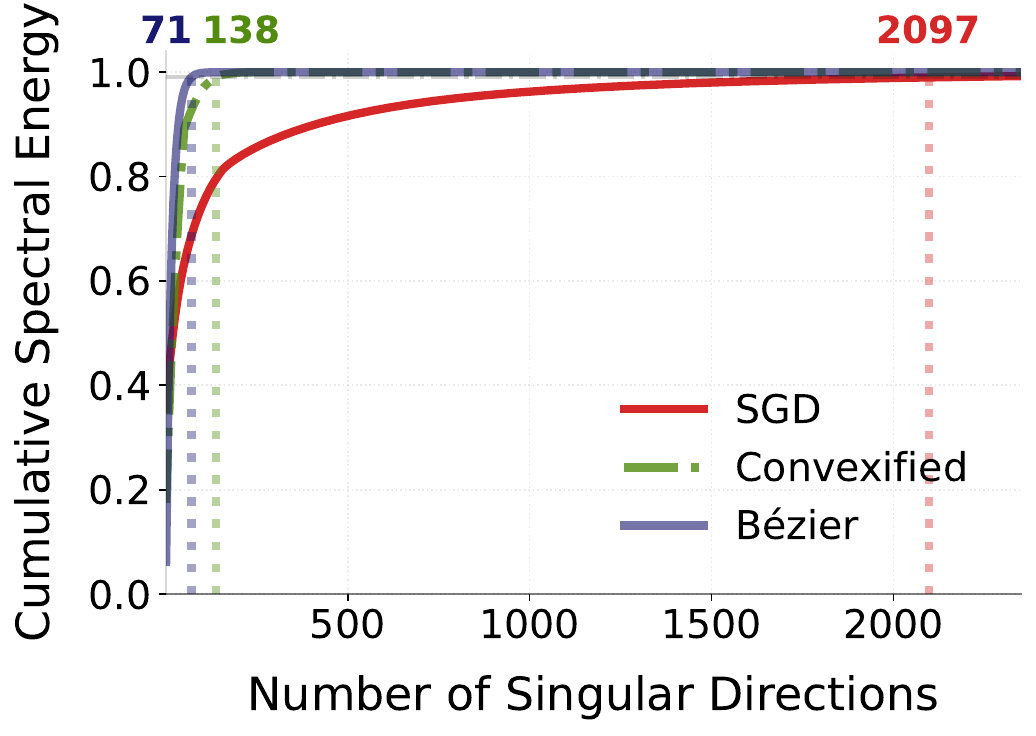}
        \caption{eICU}
    \end{subfigure}
    \hfill
    \begin{subfigure}{0.325\textwidth}
        \centering
        \includegraphics[width=\linewidth]{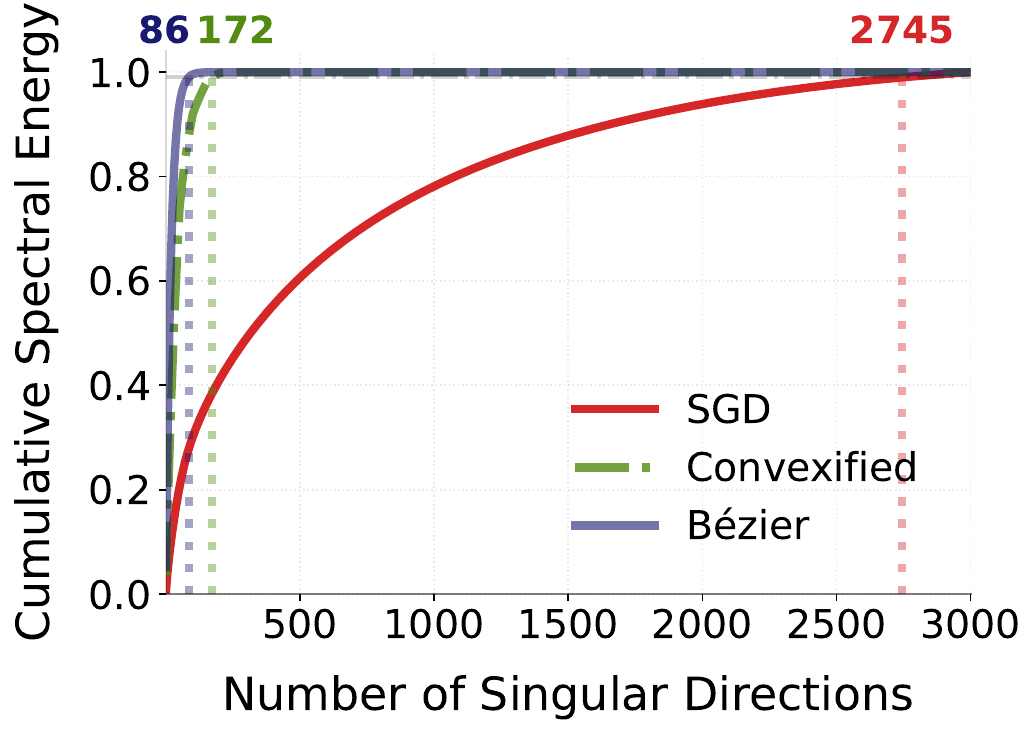}
        \caption{MIMIC-III}
    \end{subfigure}
    \caption{
    Effective dimensionality of teacher displacement supervision across datasets. Each panel shows the cumulative spectral energy of the displacement matrix as a function of the number of singular directions, computed from 50 teacher trajectories per method. SGD supervision exhibits high effective rank, with energy spread across many directions. In contrast, Bézier surrogates concentrate spectral mass in a few directions, while convexified trajectories are intermediate.
    }
    \label{fig:velocity-span-grid}
\end{figure*}

\subsection{Spectral Structure of Trajectory Supervision}
\label{subsec:spectral_structure}

The geometric analysis in Section~\ref{sec:geometric_tm} shows that the difficulty of trajectory matching depends not only on the magnitude of teacher displacements, but also on how their spectral energy is distributed. By Theorem~\ref{thm:rank-bottleneck}, for any fixed effective reachable dimension \(r\), slower spectral concentration of the displacement matrix \(\mathbf{A}\) implies larger tail energy beyond its first \(r\) singular directions and therefore a stronger conditional representability bottleneck. The low-dimensional displacement structure established above suggests that Bézier surrogates should induce supervision that is more spectrally concentrated than raw SGD trajectories.

Figure~\ref{fig:velocity-span-grid} compares the cumulative spectral energy of displacement matrices computed from 50 teacher trajectories per method across datasets. Raw SGD supervision exhibits slow spectral concentration, with energy spread across many weakly aligned directions. Convexified trajectories are more concentrated but remain relatively diffuse, whereas Bézier surrogates concentrate most of their spectral mass in far fewer directions. By Theorem~\ref{thm:rank-bottleneck}, faster spectral concentration means that a larger fraction of the supervision signal can be captured within any fixed reachable span of dimension \(r\), leaving a smaller residual tail and therefore a weaker conditional representability bottleneck. This comparison should therefore be interpreted as a teacher-side diagnostic of the bottleneck, since it characterises the spectrum of \(\mathbf{A}\) in practice rather than the student’s reachable span itself.

This contrast in spectral concentration is especially relevant in clinical settings, where low-prevalence outcomes often give rise to sparse and inconsistently aligned minority-class updates. These can introduce low-energy but task-relevant components into the supervision signal, making faithful reproduction more difficult for a compact synthetic dataset. The stronger concentration observed for Bézier surrogates is therefore consistent with the design goal of BTM, namely to preserve dominant, functionally meaningful progress while suppressing diffuse high-rank variability that is harder for a fixed synthetic dataset to reproduce.

\subsection{Bézier Trajectory Matching Objective}

BTM replaces the teacher displacements \(\boldsymbol{\Delta}_k\) in trajectory matching with segment displacements sampled from learned Bézier surrogates, as defined in \eqref{eq:bez-displacement}. In all other respects, the training pipeline follows standard trajectory matching.

Assume access to a collection of \(M\) optimised Bézier surrogates \(\{\Phi^{(m)}\}_{m=1}^M\) constructed from real-data training, together with a synthetic dataset \(\tilde{\mathcal{D}} = \{(\tilde{\mathbf{x}}_i,\tilde{\mathbf{y}}_i)\}_{i=1}^{|\tilde{\mathcal{D}}|}\). The synthetic inputs are stacked into a matrix \(\tilde{\mathbf{X}} \in \mathbb{R}^{|\tilde{\mathcal{D}}| \times d_x}\), and the corresponding labels into a tensor \(\tilde{\mathbf{Y}} \in \mathbb{R}^{|\tilde{\mathcal{D}}| \times C}\).

At each outer iteration, a surrogate index \(m \sim \mathcal{U}\{1,\dots,M\}\) and time parameters \(0 \le t_s < t_e \le 1\) are sampled, and the corresponding segment endpoints are set to
\begin{equation}
\boldsymbol{\theta}_s = \Phi^{(m)}(t_s),
\qquad
\boldsymbol{\theta}_e = \Phi^{(m)}(t_e).
\end{equation}
A student model is initialised at \(\hat{\boldsymbol{\theta}}_0 = \boldsymbol{\theta}_s\) and trained for \(N\) steps using the update in \eqref{eq:student-update}, producing \(\hat{\boldsymbol{\theta}}_{N}\). In practice, optimisation uses the normalised trajectory matching loss
\begin{equation}
\label{eq:btm_loss_final}
\mathcal{L}_{\mathrm{BTM}}
=
\frac{
\|\hat{\boldsymbol{\theta}}_{N} - \boldsymbol{\theta}_e\|_2^2
}{
\|\boldsymbol{\theta}_s - \boldsymbol{\theta}_e\|_2^2
}.
\end{equation}

For a fixed sampled segment, the denominator depends only on the surrogate endpoints and is therefore a positive constant with respect to the student trajectory. Consequently, the student displacement that minimises equation \ref{eq:btm_loss_final} is the same as the one that minimises the unnormalised residual in equation \ref{eq:tm_loss}, so the representability analysis in Section~\ref{sec:geometric_tm} still applies at the segment level. The effect of the normalisation is to reduce scale variation across segments, making short and long segments more comparable, while reweighting them by inverse squared segment length. It should therefore not be interpreted as a purely directional objective, but rather as a stabilised implementation of the same underlying segment-level matching problem.

The synthetic dataset is optimised by differentiating \(\mathcal{L}_{\mathrm{BTM}}\) through the unrolled student updates. Following standard first-order trajectory matching, the meta-gradient is approximated using the gradient with respect to the final student parameters,
\begin{equation}
\label{eq:btm_grad_final}
\boldsymbol{g}_L
=
\nabla_{\hat{\boldsymbol{\theta}}_{N}}
\mathcal{L}_{\mathrm{BTM}}
=
\frac{
2(\hat{\boldsymbol{\theta}}_{N} - \boldsymbol{\theta}_e)
}{
\|\boldsymbol{\theta}_s - \boldsymbol{\theta}_e\|_2^2
}.
\end{equation}

The corresponding approximate meta-gradient with respect to the synthetic inputs is
\begin{equation}
\label{eq:btm_meta_grad}
\nabla_{\tilde{\mathbf{X}}}\mathcal{L}_{\mathrm{BTM}}
\;\approx\;
-\eta_s
\sum_{n=0}^{N-1}
\frac{1}{|B_n|}
\sum_{(\tilde{\mathbf{x}}, \tilde{\mathbf{y}}) \in B_n}
\nabla_{\tilde{\mathbf{X}}}
\Big\langle
\nabla_{\boldsymbol{\theta}}
\ell(\hat{\boldsymbol{\theta}}_{n}; \tilde{\mathbf{x}}, \tilde{\mathbf{y}}),
\boldsymbol{g}_L
\Big\rangle,
\end{equation}
where \(\eta_s\) is the student learning rate and \(B_n\) denotes the synthetic mini-batch at step \(n\).

The synthetic inputs are updated by gradient descent,
\begin{equation}
\tilde{\mathbf{X}}
\leftarrow
\tilde{\mathbf{X}}
-
\eta_{\mathbf{x}}
\nabla_{\tilde{\mathbf{X}}}\mathcal{L}_{\mathrm{BTM}},
\end{equation}
and the labels \(\tilde{\mathbf{Y}}\) can also be optimised analogously. The complete procedure is given in Algorithm~\ref{alg:condensation} in the appendix.

\subsection{Functional Justification of Surrogate Supervision}

The low-rank analysis above concerns the geometry of the supervision signal. A complementary question is whether the surrogate path remains functionally close to the teacher supervision it replaces. The next result gives a chord-based bound showing that a quadratic Bézier surrogate has controlled curvature and remains close in prediction space to the piecewise-linear interpolation of the teacher checkpoints whenever the surrogate stays near the endpoint chord and the model is parameter-Lipschitz along the relevant paths.

\begin{theorem}[Geometric regularity and prediction fidelity of Bézier surrogates]
\label{thm:bez-sur}
Let \(\{\boldsymbol{\theta}_{\tau}\}_{\tau=0}^{T} \subset \mathbb{R}^p\) denote a teacher trajectory with initialisation \(\boldsymbol{\theta}_0\) and endpoint \(\boldsymbol{\theta}_T\), and let \(\gamma:[0,1]\to\mathbb{R}^p\) denote its piecewise-linear interpolation. Define the endpoint chord
\[
c(t) := (1-t)\boldsymbol{\theta}_0 + t\boldsymbol{\theta}_T,
\]
and let \(\Phi(t) := \Phi_{\boldsymbol{\phi}^\star}(t)\) denote the optimised quadratic Bézier surrogate. Define
\[
\kappa := 2\|\boldsymbol{\theta}_0 - 2\boldsymbol{\phi}^\star + \boldsymbol{\theta}_T\|_2,
\qquad
D := \sup_{t \in [0,1]} \|\gamma(t) - c(t)\|_2.
\]
Assume that for every \(\boldsymbol{x}\in\mathcal{X}\), the model map \(f_{\boldsymbol{\theta}}(\boldsymbol{x})\) is \(L_f\)-Lipschitz in \(\boldsymbol{\theta}\) on the set \(\{\Phi(t),\gamma(t): t\in[0,1]\}\). Then the following hold.

\begin{enumerate}
\item[\textbf{(i)}] \textbf{Smooth curvature.}
\begin{equation}
\Phi''(t)
=
2(\boldsymbol{\theta}_0 - 2\boldsymbol{\phi}^\star + \boldsymbol{\theta}_T),
\qquad
\sup_{t \in [0,1]} \|\Phi''(t)\|_2 = \kappa.
\end{equation}
Moreover,
\begin{equation}
\sup_{t\in[0,1]} \|\Phi(t)-c(t)\|_2 = \frac{\kappa}{8}.
\end{equation}

\item[\textbf{(ii)}] \textbf{Prediction fidelity.}
\begin{equation}
\sup_{\boldsymbol{x}\in\mathcal{X},\,t\in[0,1]}
\|f_{\Phi(t)}(\boldsymbol{x}) - f_{\gamma(t)}(\boldsymbol{x})\|_2
\le
L_f\left(\frac{\kappa}{8} + D\right).
\end{equation}
\end{enumerate}
\end{theorem}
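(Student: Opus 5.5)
Part (i) is a direct calculation on the quadratic Bézier form, and part (ii) is a triangle inequality through the chord $c(t)$ combined with the parameter-Lipschitz assumption.

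\textbf{Second derivative.} Differentiating \eqref{eq:btm_bezier} twice gives
\[
\Phi''(t)=2\boldsymbol{\theta}_0-4\boldsymbol{\phi}^\star+2\boldsymbol{\theta}_T=2(\boldsymbol{\theta}_0-2\boldsymbol{\phi}^\star+\boldsymbol{\theta}_T).
\]
This is constant in $t$, so its supremum norm is exactly $\kappa$.

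\textbf{Deviation from the chord.} Write $\mathbf{w}:=\boldsymbol{\theta}_0-2\boldsymbol{\phi}^\star+\boldsymbol{\theta}_T$. Expanding $(1-t)^2=(1-t)-t(1-t)$ and $t^2=t-t(1-t)$ gives
\[
\Phi(t)-c(t)=-t(1-t)\bigl(\boldsymbol{\theta}_0+\boldsymbol{\theta}_T-2\boldsymbol{\phi}^\star\bigr)=-t(1-t)\,\mathbf{w}.
\]
This agrees with the expansion in Theorem~\ref{thm:bezier-span-method} when $t_s=0$. Hence $\|\Phi(t)-c(t)\|_2=t(1-t)\|\mathbf{w}\|_2$. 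The factor $t(1-t)$ attains its maximum $1/4$ at $t=1/2$, so the supremum is $\|\mathbf{w}\|_2/4=\kappa/8$, with equality.

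\textbf{Prediction fidelity.} For fixed $\boldsymbol{x}$ and $t$, the Lipschitz hypothesis gives
\[
\|f_{\Phi(t)}(\boldsymbol{x})-f_{\gamma(t)}(\boldsymbol{x})\|_2\le L_f\|\Phi(t)-\gamma(t)\|_2.
\]
The triangle inequality then gives
\[
\|\Phi(t)-\gamma(t)\|_2\le\|\Phi(t)-c(t)\|_2+\|c(t)-\gamma(t)\|_2\le \kappa/8+D.
\]
The first term is bounded by part (i) and the second by the definition of $D$. Taking the supremum over $\boldsymbol{x}$ and $t$ yields the claim.

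\textbf{Main obstacle.} The only delicate point is how to read the Lipschitz hypothesis. It is imposed on the set $\{\Phi(t),\gamma(t)\}$, which is not convex, and the argument applies it only to the pair $(\Phi(t),\gamma(t))$. That is legitimate if Lipschitz means the pointwise two-point inequality for any two points of the set, and I will state this reading explicitly. The chord point $c(t)$ is used only as a parameter-space intermediary in the triangle inequality, so no Lipschitz property is needed at $c(t)$ or along segments between these points. Beyond that, the proof is routine algebra.
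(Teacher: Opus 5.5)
Your proof is correct and matches the paper's argument step for step. For (i) you differentiate twice and use the identity $\Phi(t)-c(t)=t(1-t)\bigl(2\boldsymbol{\phi}^\star-\boldsymbol{\theta}_0-\boldsymbol{\theta}_T\bigr)$, maximised at $t=1/2$, which the paper states separately as Lemma~\ref{lem:bezier_chord}; for (ii) you use the Lipschitz bound and a triangle inequality through $c(t)$, exactly as the paper does. Your remark that the Lipschitz hypothesis is only needed for the pairs $(\Phi(t),\gamma(t))$ makes explicit a reading the paper uses without comment.
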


\begin{proof}
Sections~\ref{sec:lemma} and \ref{sec:proof-bez-sur} of the appendix provides the complete proof.
\end{proof}

The theorem compares the surrogate \(\Phi\) with the piecewise-linear teacher interpolation \(\gamma\). The chord \(c\) serves only as a geometric reference used to control this deviation. The bound is therefore informative when both the surrogate curvature \(\kappa\) and the teacher deviation \(D\) are small.

For intuition, let the teacher updates satisfy $\boldsymbol{\theta}_{\tau+1}=\boldsymbol{\theta}_{\tau}-\eta \mathbf{u}_{\tau}$,
where \(\eta>0\) is a step size and \(\mathbf{u}_{\tau}\) is the update direction at step \(\tau\). Define the discrete second-order difference $\mathbf{h}_{\tau}
:=
\boldsymbol{\theta}_{\tau+1}-2\boldsymbol{\theta}_{\tau}+\boldsymbol{\theta}_{\tau-1}.$
Then \(\mathbf{h}_{\tau}=-\eta(\mathbf{u}_{\tau}-\mathbf{u}_{\tau-1})\), so local curvature of the teacher path is governed by step-to-step changes in the update direction. If \(\mathbf{u}_{\tau}=\mathbf{d}_{\tau}+\boldsymbol{\xi}_{\tau}\), where \(\mathbf{d}_{\tau}\) denotes systematic drift and \(\boldsymbol{\xi}_{\tau}\) denotes stochastic fluctuation, then
\[
\mathbf{h}_{\tau}
=
-\eta\bigl[(\mathbf{d}_{\tau}-\mathbf{d}_{\tau-1})+(\boldsymbol{\xi}_{\tau}-\boldsymbol{\xi}_{\tau-1})\bigr].
\]
Thus, discrete teacher curvature mixes optimisation drift with stochastic variability, whereas the quadratic surrogate has globally smooth curvature.

\begin{corollary}[Conditional supervision sufficiency]
\label{cor:bez_supervision}
Assume the setting of Theorem~\ref{thm:bez-sur}, and further assume that for every \(\boldsymbol{x}\in\mathcal{X}\), the model map \(f_{\boldsymbol{\theta}}(\boldsymbol{x})\) is \(L_f\)-Lipschitz in \(\boldsymbol{\theta}\) on the set \(\{\gamma_{\mathrm{stu}}(t),\Phi(t),\gamma(t): t\in[0,1]\}\). Let \(\gamma_{\mathrm{stu}}:[0,1]\to\mathbb{R}^p\) denote a student trajectory induced by the synthetic dataset, and suppose that
\[
\sup_{t\in[0,1]} \|\gamma_{\mathrm{stu}}(t)-\Phi(t)\|_2
\le
\varepsilon_{\mathrm{syn}}.
\]
Then
\begin{equation}
\sup_{\boldsymbol{x}\in\mathcal{X},\,t\in[0,1]}
\|f_{\gamma_{\mathrm{stu}}(t)}(\boldsymbol{x}) - f_{\gamma(t)}(\boldsymbol{x})\|_2
\le
L_f\left(\varepsilon_{\mathrm{syn}} + \frac{\kappa}{8} + D\right).
\end{equation}
\end{corollary}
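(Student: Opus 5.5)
The plan is a triangle inequality in prediction space, splitting the student-to-teacher discrepancy through the surrogate \(\Phi(t)\). Fix \(\boldsymbol{x}\in\mathcal{X}\) and \(t\in[0,1]\), and write
\[
\|f_{\gamma_{\mathrm{stu}}(t)}(\boldsymbol{x}) - f_{\gamma(t)}(\boldsymbol{x})\|_2
\le
\|f_{\gamma_{\mathrm{stu}}(t)}(\boldsymbol{x}) - f_{\Phi(t)}(\boldsymbol{x})\|_2
+
\|f_{\Phi(t)}(\boldsymbol{x}) - f_{\gamma(t)}(\boldsymbol{x})\|_2 .
\]

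\textbf{First term.} Both \(\gamma_{\mathrm{stu}}(t)\) and \(\Phi(t)\) lie in the set on which \(f_{\boldsymbol{\theta}}(\boldsymbol{x})\) is assumed \(L_f\)-Lipschitz in \(\boldsymbol{\theta}\). Applying that bound together with the tracking hypothesis gives
\[
\|f_{\gamma_{\mathrm{stu}}(t)}(\boldsymbol{x}) - f_{\Phi(t)}(\boldsymbol{x})\|_2
\le
L_f\,\|\gamma_{\mathrm{stu}}(t)-\Phi(t)\|_2
\le
L_f\,\varepsilon_{\mathrm{syn}} .
\]

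\textbf{Second term.} Here I invoke part (ii) of Theorem~\ref{thm:bez-sur}. Its hypotheses hold because the Lipschitz assumption of the corollary is imposed on a superset of \(\{\Phi(t),\gamma(t)\}\). This yields the bound \(L_f(\kappa/8 + D)\) uniformly in \(\boldsymbol{x}\) and \(t\).

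If one prefers not to cite (ii) directly, the same bound can be re-derived by a second triangle inequality through the chord \(c(t)\) in parameter space:
\[
\|\Phi(t)-\gamma(t)\|_2
\le
\|\Phi(t)-c(t)\|_2 + \|c(t)-\gamma(t)\|_2
\le
\frac{\kappa}{8} + D ,
\]
using part (i) for the first piece and the definition of \(D\) for the second. Applying Lipschitz continuity once, to the pair \((\Phi(t),\gamma(t))\), then gives \(L_f(\kappa/8+D)\). Lipschitz continuity is applied only to pairs of points in the assumed set, never to \(c(t)\) itself.

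\textbf{Conclusion and the one point to check.} Adding the two bounds gives \(L_f(\varepsilon_{\mathrm{syn}}+\kappa/8+D)\) for each fixed \(\boldsymbol{x},t\), and taking the supremum finishes the proof. There is no real obstacle. The only care needed is in how the Lipschitz hypothesis is read: it should mean \(\|f_{\boldsymbol{\theta}}(\boldsymbol{x})-f_{\boldsymbol{\theta}'}(\boldsymbol{x})\|_2\le L_f\|\boldsymbol{\theta}-\boldsymbol{\theta}'\|_2\) for any two points \(\boldsymbol{\theta},\boldsymbol{\theta}'\) of the stated set. It should not be read as a condition along segments joining them, because the segment from \(\gamma_{\mathrm{stu}}(t)\) to \(\Phi(t)\) need not lie in the set. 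I would state this reading explicitly before the argument.
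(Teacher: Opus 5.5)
Your proposal is correct and follows essentially the same route as the paper: a triangle inequality in prediction space through $f_{\Phi(t)}(\boldsymbol{x})$, the Lipschitz hypothesis with the tracking bound giving $L_f\varepsilon_{\mathrm{syn}}$, Theorem~\ref{thm:bez-sur}(ii) giving $L_f(\kappa/8+D)$, and a supremum over $\boldsymbol{x}$ and $t$. Your optional re-derivation through the chord $c(t)$ is exactly how the paper proves part (ii) itself.
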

\begin{proof}
Section~\ref{sec:proof-cor} of the appendix provides the complete proof.
\end{proof}

Corollary~\ref{cor:bez_supervision} is a conditional sufficiency statement rather than an optimisation guarantee for Algorithm~\ref{alg:condensation}. It identifies a condition under which a student trajectory that tracks the surrogate also remains close to the teacher interpolation in prediction space.

Together, Theorem~\ref{thm:bez-sur} and Corollary~\ref{cor:bez_supervision} show that the surrogate family is not only low-rank, but also functionally meaningful under explicit geometric conditions. Along with the empirical ablations in Section~\ref{subsec:path_ablation}, this supports BTM as a low-complexity surrogate family that preserves endpoint-defined structure while filtering stepwise variability from teacher supervision.

\section{Experimental Setup}
\label{sec:exp}

{\bfseries\scshape Datasets.} Evaluation is conducted on five real-world clinical datasets spanning multiple institutions, modalities, and prediction settings. These include three large de-identified emergency department cohorts collected from UK NHS Trusts in Oxford, Portsmouth, and Birmingham \citep{soltan2024scalable}, together with two widely used public ICU benchmarks, eICU \citep{pollard2018eicu} and MIMIC-III \citep{johnson2016mimic}. As summarised in Table~\ref{tab:dataset_summary}, the resulting benchmark suite covers both tabular and multivariate time-series data, and includes binary and multi-label classification tasks. This combination enables evaluation across both institution-specific operational datasets and established public critical-care benchmarks, providing a diverse and clinically realistic test bed for dataset condensation. Additional dataset details are provided in Section~\ref{sec:dataset_details}.

\begin{table}[H]
\centering
\caption{Summary of datasets used in this work.}
\label{tab:dataset_summary}
\small
\setlength{\tabcolsep}{5pt}
\renewcommand{\arraystretch}{1.12}
\setlength{\lightrulewidth}{0.35pt}
\setlength{\cmidrulewidth}{0.35pt}
\begin{tabular}{llccl}
\toprule
\textsc{Source} & \textsc{Dataset} & \textsc{Modality} & \textsc{\# Samples} & \textsc{Task} \\
\arrayrulecolor{gray!60}
\midrule
\arrayrulecolor{black}
\multirow{3}{*}{\makecell[l]{\textsc{UK NHS}\\ \textsc{ED Cohorts}}}
& \textsc{Oxford}      & Tabular     & 161,955 & \multirow{3}{*}{\makecell[l]{COVID-19 diagnosis}} \\
& \textsc{Portsmouth} & Tabular     & 38,717  & \\
& \textsc{Birmingham} & Tabular     & 95,236  & \\
\addlinespace[1pt]
\arrayrulecolor{gray!60}
\cmidrule(lr){1-5}
\arrayrulecolor{black}
\addlinespace[1pt]
\multirow{2}{*}{\makecell[l]{\textsc{Public ICU}\\ \textsc{Benchmarks}}}
& \textsc{eICU}        & Tabular     & 49,305  & \makecell[l]{In-hospital mortality} \\
\arrayrulecolor{gray!60}
\cmidrule(lr){2-5}
\arrayrulecolor{black}
& \multirow{2}{*}{\textsc{MIMIC-III}} & \multirow{2}{*}{Time-series} & 21,156 & In-hospital mortality \\
&                                     &                              & 21,728 & Phenotyping (25 classes, multi-label) \\
\bottomrule
\end{tabular}
\end{table}

Clinical datasets differ from standard vision benchmarks in ways that are directly relevant to trajectory matching. Outcomes are often highly imbalanced, and patient populations are heterogeneous and noisy, leading to optimisation dynamics with sparse and inconsistently aligned gradient signals. As discussed in Section~\ref{subsec:rank_bottleneck}, this tends to produce diffuse supervision spread across many weakly reinforced directions, which is difficult to capture within the limited reachable span of a compact synthetic dataset. Clinical settings therefore provide a particularly stringent and realistic test bed for trajectory-based condensation.

{\bfseries\scshape Baselines.} The comparison focuses on trajectory-matching-based dataset condensation methods, as these provide the most direct point of reference for BTM. The selected methods differ primarily in how the trajectory supervision signal is constructed or modified.

The baselines include Matching Training Trajectories (MTT) \citep{cazenavette2022dataset}, which directly matches multi-step parameter updates between real and synthetic training; Flat Trajectory Distillation (FTD) \citep{du2023ftd}, which promotes flatter trajectories to mitigate accumulated matching error; Difficulty-Aligned Trajectory Matching (DATM) \citep{guo2024datm}, which samples trajectory segments from different training stages to vary the difficulty of the supervision signal; and Matching Convexified Trajectories (MCT) \citep{zhong2025towards}, which replaces SGD trajectories with convexified surrogates to improve stability and efficiency. TrajEctory matching with Soft Label Assignment (TESLA) \citep{cui2023scaling} is not included, as it primarily addresses scalability by reducing memory requirements with respect to the number of unrolled optimisation steps rather than modifying the structure of the supervision signal itself. Random subset selection is included as a lower-bound baseline, and full-dataset training as an upper bound.

{\bfseries\scshape Experimental Protocol.}
Synthetic datasets are constructed at 50, 100, 200, and 500 instances per class (\emph{ipc}). For multi-label phenotyping, we interpret \emph{ipc} as a per-label budget (\emph{plb}) and map it to the total synthetic dataset size as $|\tilde{\mathcal{D}}| = \text{\emph{plb}} \times \frac{L}{\bar{c}}$, where $L$ is the number of labels and $\bar{c}$ is the average label cardinality (i.e., the mean number of positive labels per sample). This ensures that the effective supervision budget is comparable to the single-label setting while accounting for label overlap. All methods are adapted to clinical data modalities following their original implementations, with fixed hard labels used throughout. Across all datasets, 65\%, 15\%, and 20\% of the samples are used for training, validation, and testing, respectively. Following standard evaluation practice, randomly initialised models are trained from scratch on the condensed data and evaluated on the held-out test sets.  Performance is measured using the area under the receiver operating characteristic curve (AUROC) and the area under the precision--recall curve (AUPRC). For the multi-label phenotyping, we report \emph{macro}-averaged AUROC and AUPRC across labels. All results are reported over $10$ random initialisations.

Teacher trajectories are obtained by training models on the real training data with standard optimisation, with hyperparameters such as the learning rate, optimiser, and training horizon selected on the validation set. The main BTM-specific hyperparameters, including the number of student optimisation steps, the segment length \((t_e - t_s)\), and the optimisation settings for the Bézier control point, are selected in the same way. Full implementation details are provided in Section~\ref{sec:implementation} of the appendix.

Evaluation architectures are fixed across all methods. A shallow multi-layer perceptron (MLP) \citep{rumelhart1986learning} is used for tabular data, and a temporal convolutional network (TCN) \citep{bai2018empirical} for time-series data. Consistent with prior work, the same architecture is used for both condensation and evaluation, except in cross-architecture experiments. Synthetic inputs are initialised from real samples.

\section{Results and Discussion}
\label{sec:results}

\subsection{Downstream Utility of Condensed Data}

The primary evaluation criterion is downstream task performance after training on condensed data. Particular emphasis is placed on low-data and low-prevalence regimes, where diffuse supervision and weakly aligned gradients make trajectory-based condensation especially challenging.

\begin{table*}[!t]
\centering
\caption{Performance on the three NHS cohorts across different \emph{ipc} levels for COVID-19 prediction. Best results at each \emph{ipc} are highlighted in \textcolor{blue}{blue} (ours) and \textcolor{red}{red} (baseline).}
\label{tab:curial_results}

\textbf{(a) Portsmouth dataset} (5.3\% prevalence)\\[0.5ex]
\begin{sc}
\resizebox{0.85\linewidth}{!}{
    \begin{tabular}{lccccBcccc}
    \toprule
    & \multicolumn{4}{cB}{\textbf{AUROC}} & \multicolumn{4}{c}{\textbf{AUPRC}} \\
    \cmidrule(lr{0.5em}){2-5} \cmidrule(lr{0.5em}){6-9}
    \textbf{Method} & \textbf{50} & \textbf{100} & \textbf{200} & \textbf{500} & \textbf{50} & \textbf{100} & \textbf{200} & \textbf{500} \\
    \midrule
    Random     & $0.835_{\pm 0.006}$ & $0.863_{\pm 0.008}$ & $0.880_{\pm 0.003}$ & $0.877_{\pm 0.008}$
               & $0.289_{\pm 0.018}$ & $0.314_{\pm 0.025}$ & $0.388_{\pm 0.016}$ & $0.436_{\pm 0.026}$ \\
    MTT        & $0.853_{\pm 0.002}$ & $0.868_{\pm 0.001}$ & $0.874_{\pm 0.001}$ & $0.900_{\pm 0.001}$
               & $0.518_{\pm 0.001}$ & $0.542_{\pm 0.004}$ & $0.530_{\pm 0.001}$ & $0.526_{\pm 0.001}$ \\
    FTD        & $0.835_{\pm 0.002}$ & $0.871_{\pm 0.001}$ & $0.883_{\pm 0.001}$ & $0.895_{\pm 0.001}$
               & $0.498_{\pm 0.002}$ & $0.532_{\pm 0.002}$ & $0.549_{\pm 0.001}$ & $0.590_{\pm 0.001}$ \\
    MCT        & $0.867_{\pm 0.001}$ & $0.879_{\pm 0.001}$ & $0.885_{\pm 0.001}$ & $0.893_{\pm 0.001}$
               & $0.476_{\pm 0.001}$ & $0.493_{\pm 0.003}$ & $0.508_{\pm 0.001}$ & $0.541_{\pm 0.005}$ \\
    DATM       & $0.872_{\pm 0.001}$ & \bestOther{$0.884_{\pm 0.001}$} & $0.879_{\pm 0.001}$ & $0.890_{\pm 0.003}$
               & \bestOther{$0.561_{\pm 0.001}$} & $0.560_{\pm 0.001}$ & $0.563_{\pm 0.002}$ & $0.580_{\pm 0.001}$ \\
    \rowcolor{btmgray}
    BTM (Ours) & \bestOurs{$0.876_{\pm 0.002}$} & $0.880_{\pm 0.002}$ & \bestOurs{$0.907_{\pm 0.001}$} & \bestOurs{$0.901_{\pm 0.001}$}
               & $0.557_{\pm 0.001}$ & \bestOurs{$0.572_{\pm 0.001}$} & \bestOurs{$0.596_{\pm 0.001}$} & \bestOurs{$0.609_{\pm 0.001}$} \\
    \midrule
    \emph{\textbf{Full Dataset}} & \multicolumn{4}{cB}{$\mathbf{0.906_{\pm 0.002}}$} & \multicolumn{4}{c}{$\mathbf{0.610_{\pm 0.004}}$} \\
    \bottomrule
    \end{tabular}
}
\end{sc}
\par\vspace{3ex}

\textbf{(b) Oxford dataset} (1.7\% prevalence)\\[0.5ex]
\begin{sc}
\resizebox{0.85\linewidth}{!}{
    \begin{tabular}{lccccBcccc}
    \toprule
    & \multicolumn{4}{cB}{\textbf{AUROC}} & \multicolumn{4}{c}{\textbf{AUPRC}} \\
    \cmidrule(lr{0.5em}){2-5} \cmidrule(lr{0.5em}){6-9}
    \textbf{Method} & \textbf{50} & \textbf{100} & \textbf{200} & \textbf{500} & \textbf{50} & \textbf{100} & \textbf{200} & \textbf{500} \\
    \midrule
    Random     & $0.831_{\pm 0.003}$ & $0.856_{\pm 0.004}$ & $0.870_{\pm 0.005}$ & $0.879_{\pm 0.007}$
               & $0.149_{\pm 0.002}$ & $0.171_{\pm 0.006}$ & $0.230_{\pm 0.008}$ & $0.256_{\pm 0.007}$ \\
    MTT        & $0.834_{\pm 0.002}$ & $0.861_{\pm 0.004}$ & $0.855_{\pm 0.008}$ & $0.869_{\pm 0.001}$
               & $0.370_{\pm 0.008}$ & $0.391_{\pm 0.006}$ & $0.405_{\pm 0.012}$ & $0.410_{\pm 0.007}$ \\
    FTD        & $0.851_{\pm 0.008}$ & $0.850_{\pm 0.006}$ & $0.852_{\pm 0.005}$ & $0.874_{\pm 0.006}$
               & $0.368_{\pm 0.001}$ & $0.385_{\pm 0.009}$ & $0.400_{\pm 0.009}$ & $0.410_{\pm 0.005}$ \\
    MCT        & $0.845_{\pm 0.001}$ & $0.862_{\pm 0.003}$ & $0.882_{\pm 0.001}$ & $0.874_{\pm 0.003}$
               & $0.341_{\pm 0.014}$ & $0.369_{\pm 0.016}$ & $0.358_{\pm 0.004}$ & $0.352_{\pm 0.002}$ \\
    DATM       & $0.856_{\pm 0.005}$ & \bestOther{$0.868_{\pm 0.004}$} & $0.875_{\pm 0.003}$ & $0.880_{\pm 0.004}$
               & $0.359_{\pm 0.015}$ & $0.413_{\pm 0.005}$ & $0.419_{\pm 0.002}$ & $0.415_{\pm 0.003}$ \\
    \rowcolor{btmgray}
    BTM (Ours) & \bestOurs{$0.865_{\pm 0.004}$} & $0.867_{\pm 0.009}$ & \bestOurs{$0.886_{\pm 0.005}$} & \bestOurs{$0.891_{\pm 0.001}$}
               & \bestOurs{$0.414_{\pm 0.001}$} & \bestOurs{$0.425_{\pm 0.002}$} & \bestOurs{$0.435_{\pm 0.001}$} & \bestOurs{$0.440_{\pm 0.001}$} \\
    \midrule
    \emph{\textbf{Full Dataset}} & \multicolumn{4}{cB}{$\mathbf{0.901_{\pm 0.001}}$} & \multicolumn{4}{c}{$\mathbf{0.445_{\pm 0.004}}$} \\
    \bottomrule
    \end{tabular}
}
\end{sc}
\par\vspace{3ex}

\textbf{(c) Birmingham dataset} (0.8\% prevalence)\\[0.5ex]
\begin{sc}
\resizebox{0.85\linewidth}{!}{
    \begin{tabular}{lccccBcccc}
    \toprule
    & \multicolumn{4}{cB}{\textbf{AUROC}} & \multicolumn{4}{c}{\textbf{AUPRC}} \\
    \cmidrule(lr{0.5em}){2-5} \cmidrule(lr{0.5em}){6-9}
    \textbf{Method} & \textbf{50} & \textbf{100} & \textbf{200} & \textbf{500} & \textbf{50} & \textbf{100} & \textbf{200} & \textbf{500} \\
    \midrule
    Random     & $0.842_{\pm 0.011}$ & $0.857_{\pm 0.014}$ & $0.865_{\pm 0.009}$ & $0.890_{\pm 0.12}$
               & $0.080_{\pm 0.018}$ & $0.073_{\pm 0.009}$ & $0.119_{\pm 0.016}$ & $0.151_{\pm 0.009}$ \\
    MTT        & $0.828_{\pm 0.006}$ & $0.847_{\pm 0.013}$ & $0.851_{\pm 0.013}$ & $0.884_{\pm 0.007}$
               & $0.189_{\pm 0.027}$ & $0.192_{\pm 0.021}$ & $0.212_{\pm 0.016}$ & $0.234_{\pm 0.017}$ \\
    FTD        & $0.829_{\pm 0.005}$ & $0.839_{\pm 0.010}$ & $0.847_{\pm 0.010}$ & \bestOther{$0.891_{\pm 0.004}$}
               & $0.193_{\pm 0.008}$ & $0.218_{\pm 0.013}$ & $0.216_{\pm 0.018}$ & $0.229_{\pm 0.014}$ \\
    MCT        & $0.860_{\pm 0.001}$ & $0.862_{\pm 0.007}$ & $0.877_{\pm 0.012}$ & $0.886_{\pm 0.001}$
               & $0.233_{\pm 0.002}$ & $0.220_{\pm 0.023}$ & $0.234_{\pm 0.003}$ & $0.247_{\pm 0.006}$ \\
    DATM       & $0.824_{\pm 0.001}$ & $0.832_{\pm 0.022}$ & $0.851_{\pm 0.016}$ & $0.870_{\pm 0.005}$
               & $0.178_{\pm 0.004}$ & $0.229_{\pm 0.005}$ & $0.235_{\pm 0.002}$ & $0.240_{\pm 0.006}$ \\
    \rowcolor{btmgray}
    BTM (Ours) & \bestOurs{$0.863_{\pm 0.004}$} & \bestOurs{$0.872_{\pm 0.004}$} & \bestOurs{$0.883_{\pm 0.004}$} & $0.890_{\pm 0.003}$
               & \bestOurs{$0.269_{\pm 0.007}$} & \bestOurs{$0.270_{\pm 0.002}$} & \bestOurs{$0.274_{\pm 0.003}$} & \bestOurs{$0.284_{\pm 0.002}$} \\
    \midrule
    \emph{\textbf{Full Dataset}} & \multicolumn{4}{cB}{$\mathbf{0.898_{\pm 0.002}}$} & \multicolumn{4}{c}{$\mathbf{0.293_{\pm 0.007}}$} \\
    \bottomrule
    \end{tabular}
}
\end{sc}

\end{table*}

\subsubsection{NHS emergency department cohorts.}

Table~\ref{tab:curial_results} reports results on the three NHS cohorts across synthetic budgets ranging from 50 to 500 instances per class for the task of COVID-19 prediction. BTM performs strongly across all settings and is particularly effective in the clinically relevant low-prevalence regime. The gains are most consistent in AUPRC, suggesting that the surrogate supervision is especially beneficial when positive-class signal is rare and difficult to preserve under severe compression.

On the Portsmouth dataset (5.3\% prevalence), BTM achieves the best AUROC at \emph{ipc}=50, 200, and 500, and the best AUPRC from \emph{ipc}=100 onward. In AUPRC, this corresponds to improvements of 2.1\%, 5.9\%, and 3.2\% over the strongest baseline at \emph{ipc}=100, 200, and 500, respectively. At \emph{ipc}=500, BTM reaches \(0.609\), which is 99.8\% of full-dataset performance (\(0.610\)). At the smallest budget, DATM is marginally stronger in AUPRC, but BTM overtakes it as the synthetic budget increases.

On the Oxford dataset (1.7\% prevalence), BTM again achieves the best AUROC at \emph{ipc}=50, 200, and 500, and attains the strongest AUPRC at every budget. The relative AUPRC gains over the strongest baseline are 11.9\%, 2.9\%, 3.8\%, and 6.0\% at \emph{ipc}=50, 100, 200, and 500, respectively. Performance improves steadily from \(0.414\) at \emph{ipc}=50 to \(0.440\) at \emph{ipc}=500, corresponding to 98.9\% of the full-dataset AUPRC (\(0.445\)).

The clearest gains appear on the Birmingham dataset (0.8\% prevalence), the most imbalanced cohort. Here, BTM delivers the best AUPRC at every budget and the best AUROC up to \emph{ipc}=200, while remaining competitive at \emph{ipc}=500. The AUPRC gains over the strongest baseline are 15.5\%, 17.9\%, 16.6\%, and 15.0\% across the four budgets, highlighting the benefit of BTM in the most data-sparse and low-prevalence setting. At \emph{ipc}=500, BTM reaches \(0.284\), or 96.9\% of full-dataset AUPRC (\(0.293\)).

This pattern is consistent with the geometric analysis in Section~\ref{sec:geometric_tm}. As prevalence decreases, the supervision signal becomes increasingly diffuse and weakly aligned, making trajectory matching more sensitive to high-rank variability. BTM mitigates this effect by replacing raw SGD supervision with smoother, low-rank surrogate trajectories that retain task-relevant structure while suppressing stepwise noise.

Among the baselines, MCT is generally the most competitive in AUROC, suggesting that reducing supervision rank is already beneficial. Its gains in AUPRC, however, are less consistent, particularly on the lower-prevalence cohorts. This aligns with the methodological difference between the two approaches. MCT constructs convexified trajectory surrogates from stored checkpoints, whereas BTM learns Bézier control points that reduce average path loss and encourage a smoother descent profile. DATM is also competitive on the Portsmouth and Oxford datasets, especially at smaller budgets, but its advantage weakens as prevalence decreases. Overall, these results indicate that learned low-rank surrogate supervision is particularly useful in the low-data, low-prevalence regimes most relevant to clinical practice.

\begin{table*}[!t]
\centering
\caption{Performance on the eICU and MIMIC-III datasets across different \emph{ipc} levels for in-hospital mortality prediction. Best results at each \emph{ipc} are highlighted in \textcolor{blue}{blue} (ours) and \textcolor{red}{red} (baseline).}
\label{tab:icu_mort}

\textbf{(a) eICU dataset}\\[0.5ex]
\begin{sc}
\resizebox{0.9\linewidth}{!}{
    \begin{tabular}{lccccBcccc}
    \toprule
    & \multicolumn{4}{cB}{\textbf{AUROC}} & \multicolumn{4}{c}{\textbf{AUPRC}} \\
    \cmidrule(lr{0.5em}){2-5} \cmidrule(lr{0.5em}){6-9}
    \textbf{Method} & \textbf{50} & \textbf{100} & \textbf{200} & \textbf{500} & \textbf{50} & \textbf{100} & \textbf{200} & \textbf{500} \\
    \midrule
    Random     & $0.740_{\pm 0.022}$ & $0.763_{\pm 0.013}$ & $0.804_{\pm 0.008}$ & $0.840_{\pm 0.005}$
               & $0.277_{\pm 0.027}$ & $0.308_{\pm 0.022}$ & $0.368_{\pm 0.017}$ & $0.429_{\pm 0.013}$ \\
    MTT        & $0.754_{\pm 0.003}$ & $0.800_{\pm 0.001}$ & $0.829_{\pm 0.004}$ & $0.849_{\pm 0.001}$
               & $0.329_{\pm 0.003}$ & $0.390_{\pm 0.004}$ & $0.398_{\pm 0.007}$ & $0.454_{\pm 0.002}$ \\
    FTD        & $0.798_{\pm 0.002}$ & $0.803_{\pm 0.002}$ & $0.824_{\pm 0.001}$ & $0.847_{\pm 0.004}$
               & $0.374_{\pm 0.002}$ & $0.389_{\pm 0.001}$ & $0.430_{\pm 0.002}$ & $0.442_{\pm 0.004}$ \\
    MCT        & $0.773_{\pm 0.004}$ & $0.828_{\pm 0.001}$ & $0.837_{\pm 0.002}$ & $0.847_{\pm 0.001}$
               & $0.397_{\pm 0.005}$ & $0.427_{\pm 0.001}$ & $0.448_{\pm 0.001}$ & $0.464_{\pm 0.001}$ \\
    DATM       & \bestOther{$0.854_{\pm 0.001}$} & $0.852_{\pm 0.001}$ & $0.846_{\pm 0.001}$ & $0.856_{\pm 0.001}$
               & $0.462_{\pm 0.001}$ & $0.451_{\pm 0.003}$ & $0.464_{\pm 0.001}$ & $0.482_{\pm 0.001}$ \\
    \rowcolor{btmgray}
    BTM (Ours) & $0.854_{\pm 0.002}$ & \bestOurs{$0.859_{\pm 0.001}$} & \bestOurs{$0.861_{\pm 0.001}$} & \bestOurs{$0.874_{\pm 0.020}$}
               & \bestOurs{$0.479_{\pm 0.002}$} & \bestOurs{$0.486_{\pm 0.001}$} & \bestOurs{$0.476_{\pm 0.001}$} & \bestOurs{$0.506_{\pm 0.001}$} \\
    \midrule
    \emph{\textbf{Full Dataset}} & \multicolumn{4}{cB}{$\mathbf{0.879_{\pm 0.002}}$} & \multicolumn{4}{c}{$\mathbf{0.515_{\pm 0.003}}$} \\
    \bottomrule
    \end{tabular}
}
\end{sc}
\par\vspace{3ex}

\textbf{(b) MIMIC-III dataset}\\[0.5ex]
\begin{sc}
\resizebox{0.9\linewidth}{!}{
    \begin{tabular}{lccccBcccc}
    \toprule
    & \multicolumn{4}{cB}{\textbf{AUROC}} & \multicolumn{4}{c}{\textbf{AUPRC}} \\
    \cmidrule(lr{0.5em}){2-5} \cmidrule(lr{0.5em}){6-9}
    \textbf{Method} & \textbf{50} & \textbf{100} & \textbf{200} & \textbf{500} & \textbf{50} & \textbf{100} & \textbf{200} & \textbf{500} \\
    \midrule
    Random     & $0.740_{\pm 0.022}$ & $0.763_{\pm 0.013}$ & $0.804_{\pm 0.008}$ & $0.840_{\pm 0.005}$
               & $0.307_{\pm 0.022}$ & $0.338_{\pm 0.022}$ & $0.368_{\pm 0.017}$ & $0.409_{\pm 0.013}$ \\
    MTT        & $0.800_{\pm 0.008}$ & $0.821_{\pm 0.003}$ & $0.834_{\pm 0.002}$ & $0.838_{\pm 0.002}$
               & $0.421_{\pm 0.021}$ & $0.449_{\pm 0.013}$ & $0.493_{\pm 0.006}$ & $0.492_{\pm 0.007}$ \\
    FTD        & $0.770_{\pm 0.003}$ & $0.779_{\pm 0.005}$ & $0.811_{\pm 0.005}$ & $0.828_{\pm 0.003}$
               & $0.361_{\pm 0.009}$ & $0.384_{\pm 0.012}$ & $0.443_{\pm 0.011}$ & $0.473_{\pm 0.009}$ \\
    MCT        & $0.821_{\pm 0.007}$ & $0.831_{\pm 0.002}$ & $0.835_{\pm 0.002}$ & $0.840_{\pm 0.002}$
               & $0.453_{\pm 0.013}$ & $0.481_{\pm 0.009}$ & $0.492_{\pm 0.007}$ & $0.493_{\pm 0.005}$ \\
    DATM       & $0.824_{\pm 0.003}$ & $0.834_{\pm 0.002}$ & $0.838_{\pm 0.002}$ & \bestOther{$0.842_{\pm 0.002}$}
               & $0.461_{\pm 0.009}$ & $0.484_{\pm 0.008}$ & \bestOther{$0.500_{\pm 0.007}$} & $0.496_{\pm 0.006}$ \\
    \rowcolor{btmgray}
    BTM (Ours) & \bestOurs{$0.828_{\pm 0.005}$} & \bestOurs{$0.835_{\pm 0.003}$} & \bestOurs{$0.839_{\pm 0.002}$} & $0.840_{\pm 0.002}$
               & \bestOurs{$0.471_{\pm 0.008}$} & \bestOurs{$0.488_{\pm 0.007}$} & $0.499_{\pm 0.006}$ & \bestOurs{$0.503_{\pm 0.003}$} \\
    \midrule
    \emph{\textbf{Full Dataset}} & \multicolumn{4}{cB}{$\mathbf{0.837_{\pm 0.003}}$} & \multicolumn{4}{c}{$\mathbf{0.499_{\pm 0.008}}$} \\
    \bottomrule
    \end{tabular}
}
\end{sc}
\end{table*}

\subsubsection{Public ICU benchmarks.}
Table~\ref{tab:icu_mort} reports results on the eICU and MIMIC-III datasets across synthetic budgets ranging from 50 to 500 instances per class for the task of in-hospital mortality prediction. As with the NHS cohorts, BTM performs strongly across compression levels, with the clearest and most consistent gains appearing in AUPRC. This again suggests that structured low-rank surrogate supervision is particularly effective when the clinically relevant signal is sparse and must be retained under aggressive data compression.

On the eICU dataset, BTM attains the best or tied-best AUROC at every budget and the strongest AUPRC throughout. In AUPRC, the relative gains over the strongest baseline are 3.7\%, 7.8\%, 2.6\%, and 5.0\% at \emph{ipc}=50, 100, 200, and 500, respectively. At the largest budget, BTM reaches \(0.506\), corresponding to 98.3\% of full-dataset AUPRC (\(0.515\)). These results indicate that the learned Bézier surrogates retain task-relevant supervision particularly well even at low synthetic budgets.

On the MIMIC-III dataset, BTM achieves the best AUROC at \emph{ipc}=50, 100, and 200, while remaining competitive at \emph{ipc}=500. In AUPRC, BTM is best at \emph{ipc}=50, 100, and 500, with relative gains of 2.2\%, 0.8\%, and 1.4\% over the strongest baseline at those budgets. At \emph{ipc}=200, DATM is marginally stronger by \(0.001\) AUPRC, indicating that both methods are highly competitive in this regime. At \emph{ipc}=500, BTM reaches \(0.503\), slightly exceeding the full-dataset AUPRC of \(0.499\) within experimental variability.

Among the baselines, DATM and MCT are generally the strongest competitors on these public ICU benchmarks. However, BTM remains the most consistent overall in AUPRC, particularly on eICU, and is either best or near-best on MIMIC-III across all budgets. FTD appears less effective on MIMIC-III, which may reflect the structured time-series setting with a TCN, where encouraging uniformly flat trajectories can suppress temporally localised signals. Overall, the eICU and MIMIC-III results reinforce the same conclusion as the NHS cohorts that replacing raw SGD supervision with learned low-rank surrogate trajectories improves the utility of condensed datasets, especially in low-data settings.

\begin{table*}[!t]
\centering
\caption{Performance on the MIMIC-III dataset across different \emph{ipc} levels for 25-class multi-label phenotyping. Best results at each \emph{ipc} are highlighted in \textcolor{blue}{blue} (ours) and \textcolor{red}{red} (baseline).}
\label{tab:mimic_pheno}

\textbf{MIMIC-III dataset (Phenotyping)}\\[0.5ex]
\begin{sc}
\resizebox{0.9\linewidth}{!}{
    \begin{tabular}{lccccBcccc}
    \toprule
    & \multicolumn{4}{cB}{\textbf{Macro AUROC}} & \multicolumn{4}{c}{\textbf{Macro AUPRC}} \\
    \cmidrule(lr{0.5em}){2-5} \cmidrule(lr{0.5em}){6-9}
    \textbf{Method} & \textbf{50} & \textbf{100} & \textbf{200} & \textbf{500} & \textbf{50} & \textbf{100} & \textbf{200} & \textbf{500} \\
    \midrule
    Random     
    & $0.602_{\pm 0.005}$ & $0.640_{\pm 0.005}$ & $0.661_{\pm 0.001}$ & $0.687_{\pm 0.002}$
    & $0.261_{\pm 0.005}$ & $0.290_{\pm 0.004}$ & $0.312_{\pm 0.002}$ & $0.340_{\pm 0.002}$ \\

    MTT        
    & $0.668_{\pm 0.004}$ & $0.689_{\pm 0.004}$ & $0.705_{\pm 0.003}$ & $0.717_{\pm 0.003}$
    & $0.310_{\pm 0.008}$ & $0.336_{\pm 0.006}$ & $0.358_{\pm 0.005}$ & $0.372_{\pm 0.004}$ \\

    FTD        
    & $0.659_{\pm 0.005}$ & $0.681_{\pm 0.004}$ & $0.699_{\pm 0.004}$ & $0.714_{\pm 0.003}$
    & $0.299_{\pm 0.008}$ & $0.325_{\pm 0.007}$ & $0.341_{\pm 0.006}$ & $0.366_{\pm 0.005}$ \\

    MCT        
    & $0.675_{\pm 0.004}$ & $0.698_{\pm 0.004}$ & $0.713_{\pm 0.003}$ & $0.714_{\pm 0.002}$
    & $0.321_{\pm 0.007}$ & $0.342_{\pm 0.006}$ & $0.361_{\pm 0.005}$ & $0.369_{\pm 0.004}$ \\

    DATM       
    & $0.679_{\pm 0.004}$ & $0.700_{\pm 0.003}$ & $0.715_{\pm 0.003}$ & \bestOther{$0.730_{\pm 0.002}$}
    & $0.327_{\pm 0.007}$ & $0.352_{\pm 0.006}$ & $0.371_{\pm 0.004}$ & $0.379_{\pm 0.003}$ \\

    \rowcolor{btmgray}
    BTM (Ours) 
    & \bestOurs{$0.690_{\pm 0.003}$} & \bestOurs{$0.713_{\pm 0.003}$} & \bestOurs{$0.727_{\pm 0.002}$} & $0.729_{\pm 0.002}$
    & \bestOurs{$0.339_{\pm 0.006}$} & \bestOurs{$0.365_{\pm 0.005}$} & \bestOurs{$0.383_{\pm 0.004}$} & \bestOurs{$0.389_{\pm 0.003}$} \\

    \midrule
    \emph{\textbf{Full Dataset}} 
    & \multicolumn{4}{cB}{$\mathbf{0.738_{\pm 0.001}}$} 
    & \multicolumn{4}{c}{$\mathbf{0.395_{\pm 0.002}}$} \\
    \bottomrule
    \end{tabular}
}
\end{sc}
\end{table*}

\paragraph{MIMIC-III phenotyping.}
Table~\ref{tab:mimic_pheno} reports results on the MIMIC-III dataset for 25-class multi-label phenotyping across synthetic budgets ranging from 50 to 500 instances per class, which we map to a per-label budget as described in Section~\ref{sec:exp}. As in the binary prediction tasks, BTM performs strongly across all compression levels, with the clearest and most consistent gains appearing in macro AUPRC. This suggests that structured surrogate supervision is effective at preserving label-specific predictive signal under severe compression in the multi-label setting.

In terms of macro AUROC, BTM achieves the best performance at \emph{ipc}=50, 100, and 200, with values of \(0.690\), \(0.713\), and \(0.727\), respectively. At \emph{ipc}=500, DATM is marginally stronger (\(0.730\) versus \(0.729\)), indicating that the two methods are highly competitive at larger synthetic budgets. Overall, the AUROC gains are smaller than in the binary tasks, which is consistent with the greater difficulty of matching supervision in the multi-label setting.

The improvements are more pronounced in macro AUPRC. BTM attains the best performance at every budget, exceeding the strongest baseline by 3.7\%, 3.7\%, 3.2\%, and 2.7\% at \emph{ipc}=50, 100, 200, and 500, respectively. Macro AUPRC increases steadily from \(0.339\) at \emph{ipc}=50 to \(0.389\) at \emph{ipc}=500, reaching 98.5\% of full-dataset performance (\(0.395\)). This pattern suggests that BTM is particularly effective at retaining weaker or less consistently reinforced label signals, which are especially important for precision--recall performance in multi-label clinical prediction.

Among the baselines, DATM is the strongest overall competitor. It remains consistently competitive across budgets and achieves the best macro AUROC at \emph{ipc}=500. MCT is also competitive in macro AUROC, but its macro AUPRC is weaker, while MTT remains below DATM and BTM on both metrics. FTD is again less effective, particularly in macro AUPRC, suggesting that enforcing flatter surrogate trajectories alone is insufficient when supervision must preserve diverse, label-specific structure. 

Overall, these results reinforce the same conclusion as in the binary tasks that replacing raw SGD supervision with structured, low-rank surrogate trajectories improves the utility of condensed datasets. In the multi-label setting, where supervision is inherently more diffuse because of label overlap, these benefits are most clearly reflected in macro AUPRC.

\subsection{Cross-Architecture Generalisation}
\label{subsec:cross-arch}

Cross-architecture evaluation tests whether the utility of a condensed dataset depends strongly on the architecture used during condensation. Figure~\ref{fig:crossarch} reports AUPRC at \emph{ipc}=500. For each dataset, synthetic data are first condensed using a single source architecture, highlighted by the shaded region, and then evaluated on a set of unseen target architectures. We compare BTM only against DATM here because DATM was the strongest-performing baseline on these datasets in the main experiments, making it the most informative reference point for cross-architecture transfer.

\begin{figure}[t]
    \centering
    \includegraphics[width=\linewidth]{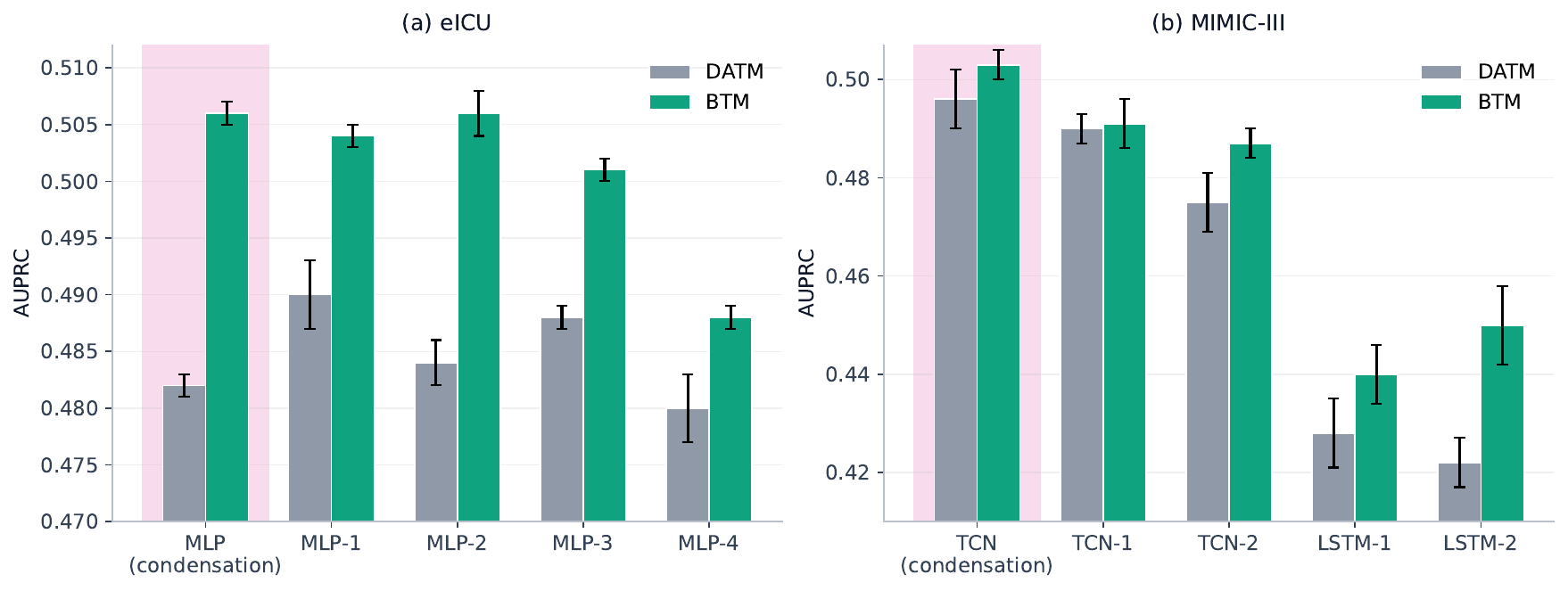}
\caption{\textbf{Cross-architecture generalisation at \emph{ipc}=500.} Synthetic datasets are condensed using a single source architecture for each dataset (shaded) and then evaluated on unseen target architectures. DATM is shown as the comparison baseline because it was the strongest-performing baseline on these datasets for in-hospital mortality prediction in the main experiments. BTM consistently outperforms DATM, especially under larger architecture shifts.}
    \label{fig:crossarch}
\end{figure}

On the eICU dataset, synthetic data condensed with an MLP transfer robustly across MLP variants of different depths and widths. BTM outperforms DATM on every target architecture, with absolute AUPRC gains ranging from 0.008 to 0.024. Its performance also remains stable across the MLP family, varying only from 0.488 to 0.506. This suggests that the supervision induced by BTM is not tightly coupled to a specific parametrisation of the MLP, but instead captures task-relevant signal that generalises across closely related architectures.

The transfer setting is more demanding on MIMIC-III, where condensation is performed with a TCN and evaluation includes both TCN and LSTM targets. Here, architecture shift changes the temporal inductive bias of the model, making transfer substantially harder. Even in this setting, BTM remains consistently stronger than DATM across all target architectures. The gains are modest within the TCN family, but larger under transfer to LSTMs, where BTM improves AUPRC by 0.012 and 0.028 on LSTM-1 and LSTM-2, respectively. Relative to its source-architecture performance, BTM also exhibits slightly smaller degradation than DATM under TCN-to-LSTM transfer.

Overall, these results suggest that structuring the supervision signal through learned low-rank surrogate trajectories improves robustness to architecture shift. This effect is most evident when transferring across model classes with different inductive biases, where preserving task-relevant structure without overfitting to architecture-specific optimisation dynamics becomes especially important.

\subsection{Storage Efficiency}
Figure~\ref{fig:storage_cost} shows that BTM substantially reduces trajectory storage across all clinical datasets. Relative to full SGD trajectories, Bézier surrogates yield approximately $33\times$ lower storage on Oxford (NHS) and eICU, and $20\times$ lower storage on MIMIC-III. These reductions lower memory requirements and make it easier to use more expert trajectories in resource-constrained clinical settings.

\begin{figure}[t]
    \centering
    \includegraphics[scale=0.5]{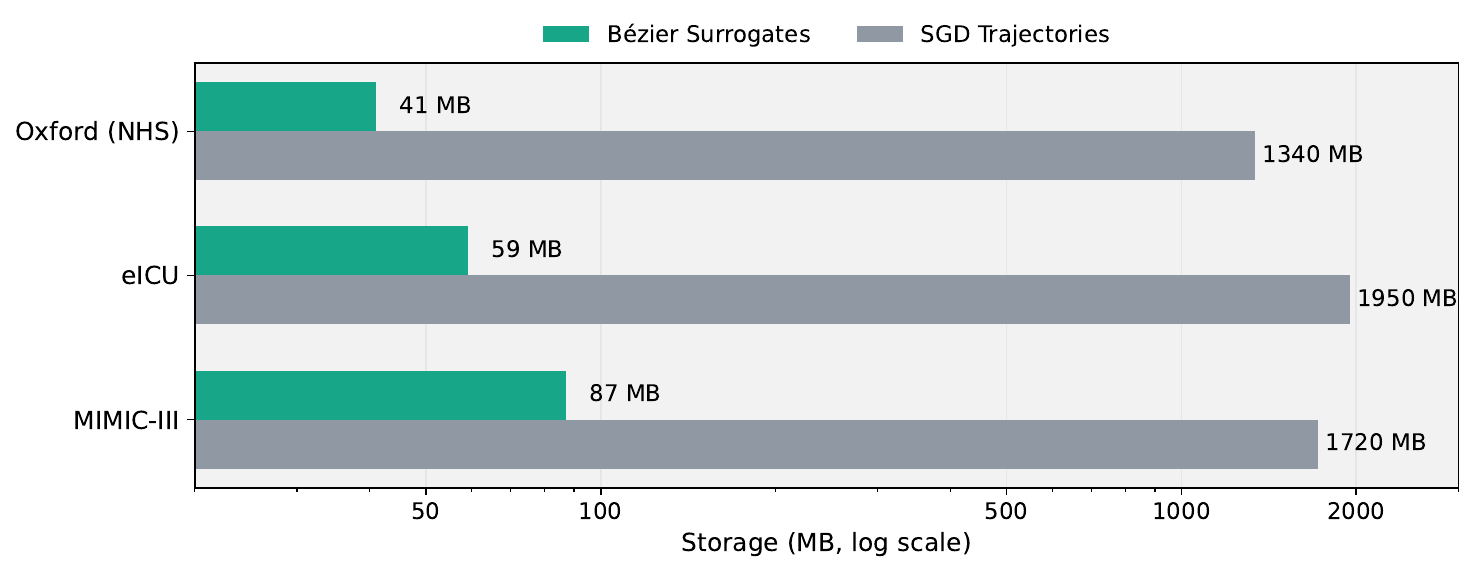}
    \caption{\textbf{Trajectory storage across clinical datasets.} Compared with full SGD trajectories, Bézier surrogates substantially reduce storage requirements, yielding approximately $33\times$ lower storage on Oxford (NHS) and eICU, and $20\times$ lower storage on MIMIC-III.}
    \label{fig:storage_cost}
\end{figure}

\subsection{Surrogate Path Complexity Ablation}
\label{subsec:path_ablation}

Figure~\ref{fig:path_ablation} evaluates how surrogate trajectory parameterisation affects condensation quality. We compare three increasingly structured path families: simple linear interpolation, a convexified linear trajectory that preserves temporal structure from SGD through fixed convex weights, and the proposed quadratic Bézier curve with a learnable control point. Results are reported in AUPRC across five clinical datasets at both low (\emph{ipc}=50) and high (\emph{ipc}=500) synthetic budgets.

A consistent trend emerges across datasets and budgets. Simple linear interpolation provides a strong baseline, indicating that reducing trajectory complexity is beneficial relative to storing and following full stochastic optimisation paths. Introducing a learnable control point yields further gains in almost every case: Bézier trajectories achieve the highest AUPRC in 9 of 10 dataset--budget settings and remain within 0.001 of the best result in the remaining case. Improvements are especially pronounced on the Oxford, Portsmouth, eICU, and MIMIC-III datasets, and persist across both low and high budgets.

\begin{figure}[t]
    \centering
    \includegraphics[scale=0.5]{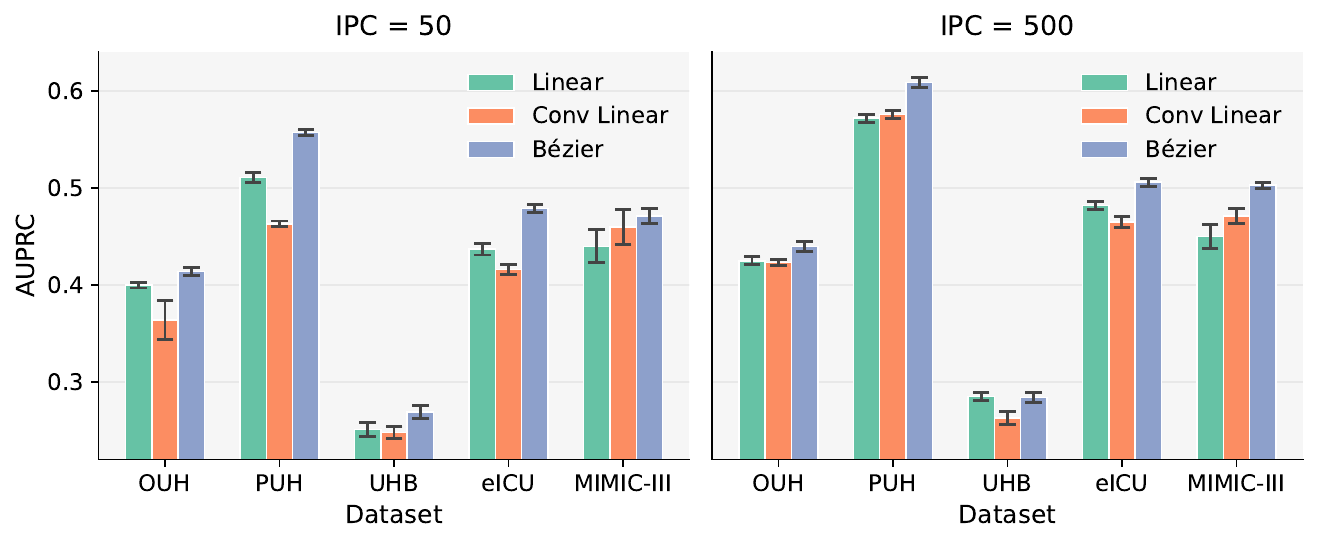}
    \caption{\textbf{Surrogate path complexity ablation.} AUPRC across five clinical datasets at \emph{ipc}=50 and \emph{ipc}=500 for three surrogate trajectory parameterisations: linear interpolation, convexified linear interpolation, and quadratic Bézier curves. OUH, PUH, and UHB denote Oxford, Portsmouth, and Birmingham NHS cohorts, respectively. Error bars denote standard deviation across runs. Bézier trajectories achieve the strongest overall performance, outperforming the linear variants in 9 of 10 dataset--budget settings.}
    \label{fig:path_ablation}
\end{figure}

\begin{figure*}[t]
    \centering
    \begin{subfigure}{0.325\textwidth}
        \centering
        \includegraphics[width=\linewidth]{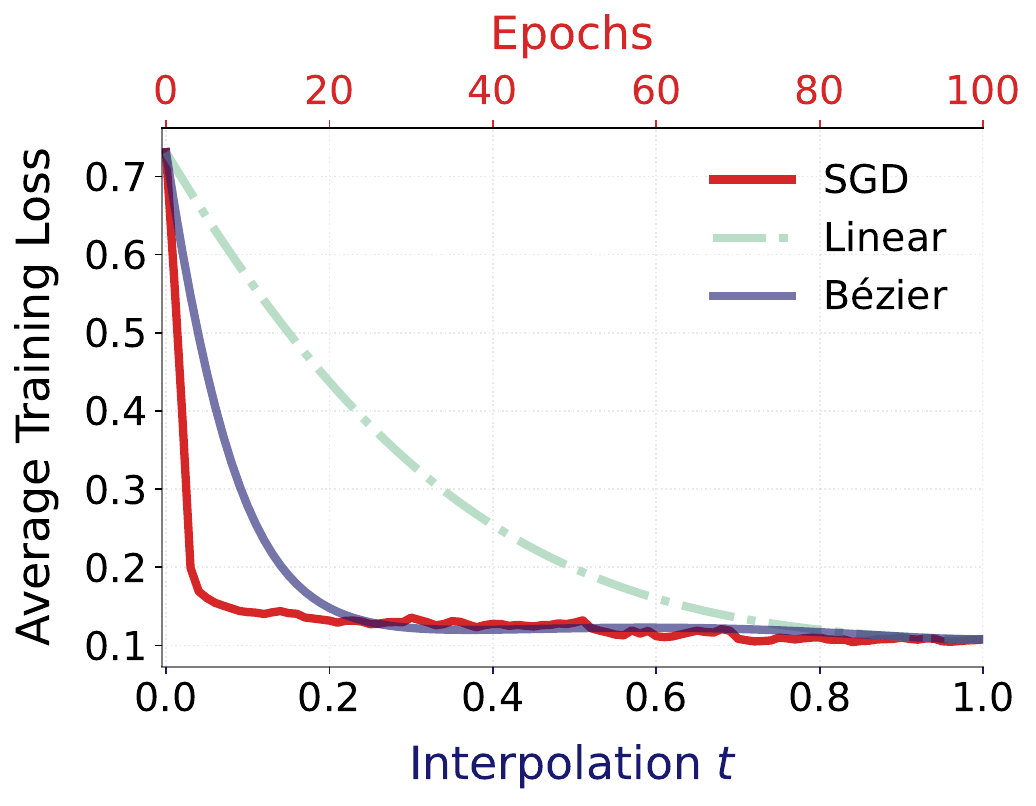}
        \caption{Portsmouth (NHS)}
    \end{subfigure}
    \hfill
    \begin{subfigure}{0.325\textwidth}
        \centering
        \includegraphics[width=\linewidth]{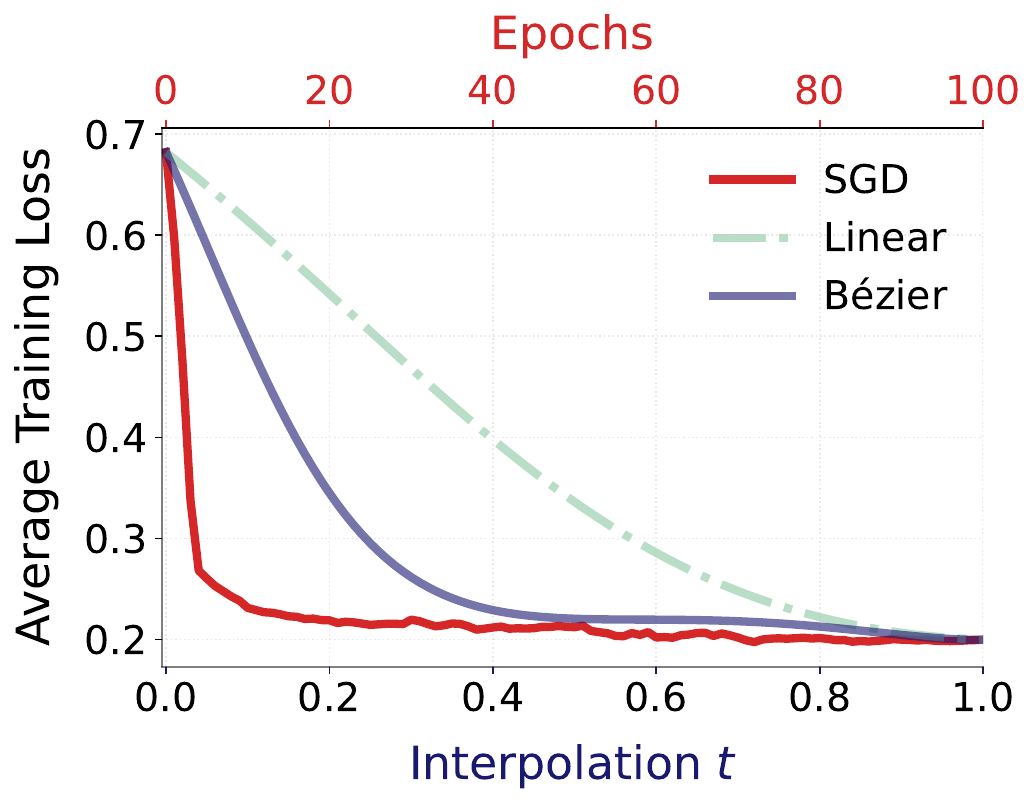}
        \caption{eICU}
    \end{subfigure}
    \hfill
    \begin{subfigure}{0.325\textwidth}
        \centering
        \includegraphics[width=\linewidth]{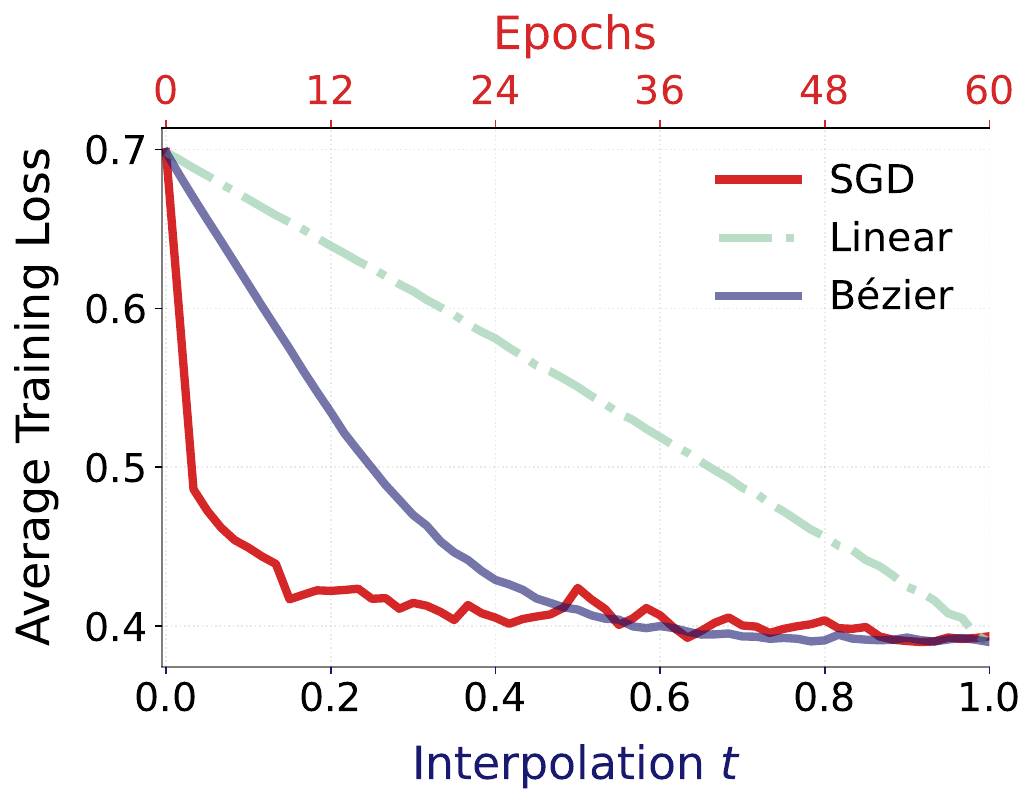}
        \caption{MIMIC-III (Phenotyping)}
    \end{subfigure}
    \caption{\textbf{Training loss profiles along surrogate trajectories.} Average training loss for linear and quadratic Bézier trajectories as a function of interpolation parameter $t$, and for SGD as a function of training epochs, across datasets. While linear interpolation provides a smoother and more structured path than SGD, it can still traverse higher-loss regions. In contrast, the Bézier trajectory remains consistently in lower-loss regions due to its task-optimised construction.}
    \label{fig:traj_loss_profile}
\end{figure*}

By contrast, the convexified trajectory often underperforms simple linear interpolation, despite preserving more of the temporal structure of SGD. This suggests that retaining trajectory structure alone is insufficient; the path must be explicitly optimised to provide a compact and informative supervision signal.

Figure~\ref{fig:traj_loss_profile} provides a functional view of this effect by examining the loss landscape along each trajectory. While linear interpolation smooths the high-variance behaviour of SGD, it can still traverse higher-loss regions between endpoints. In contrast, the Bézier trajectory is optimised to minimise average loss along the path and remains consistently in lower-loss regions, yielding a supervision signal that is both structured and task-aligned rather than merely smoothed.

Together, these results support the central design choice of BTM: a low-complexity surrogate path is most effective when it is both structured and task-optimised, yielding a stronger and more informative supervision signal for dataset condensation.

\section{Conclusion}
We presented a geometric view of trajectory matching showing that, for a fixed synthetic dataset, the trajectory-matching residual is lower bounded by the component of the supervision signal lying outside the student’s reachable gradient span. This yields a conditional rank-based representability bottleneck, where, when the supervision signal is not representable within that span, optimisation alone cannot eliminate the mismatch. This perspective clarifies a core limitation of existing trajectory-based dataset condensation methods and motivates the design of supervision signals that are better aligned with the student’s optimisation geometry.

Motivated by this insight, we introduced Bézier Trajectory Matching (BTM), which replaces long stochastic optimisation trajectories with structured, low-rank surrogates. By learning quadratic Bézier paths that reduce average path loss and encourage a smooth descent profile from initialisation to solution, BTM provides a more structured supervision signal while preserving functionally meaningful training dynamics. Across diverse clinical datasets spanning tabular and time-series modalities, BTM consistently improves predictive performance, with the strongest gains in low-prevalence and low-data regimes. It also remains robust under architecture shift and reduces trajectory storage requirements by up to $33\times$, improving the practicality of trajectory-based condensation in resource-constrained clinical settings.

More broadly, our results suggest that effective dataset condensation depends not only on how much supervision is provided, but also on how that supervision is structured. In clinical contexts, this is particularly important because compact synthetic datasets may help support more efficient reuse of information derived from governed datasets while reducing storage and training costs. BTM currently uses a fixed quadratic parameterisation, and extending it to richer adaptive trajectory families is a natural next step. Incorporating differential privacy is another important direction for enabling safe and scalable clinical deployment.

\subsubsection*{Broader Impact Statement}
This work has potential positive impact in machine learning settings where access to large real-world datasets is constrained by governance, storage, or computational cost. By improving dataset condensation, BTM may reduce the resources required to store, transfer, and repeatedly train on large datasets, thereby supporting more reproducible experimentation, lowering barriers for smaller research groups, and enabling more sustainable model development. In clinical machine learning, these benefits are especially relevant because access to governed health data is often restricted, so compact synthetic datasets may help support broader methodological research on information derived from such data in resource-constrained academic and clinical environments.

These benefits should be balanced against important risks. Condensed or synthetic datasets are not inherently safe to share, and BTM does not provide a formal privacy guarantee or replace privacy-preserving methods, data governance procedures, or access controls. Such datasets may also retain or amplify demographic imbalance, site-specific artefacts, and historical bias, which could lead to misleading conclusions or uneven downstream performance across patient subgroups. More capable condensation methods could also be misused to create compact surrogates of sensitive datasets that are redistributed or deployed without adequate scrutiny of privacy, fairness, or clinical validity. We therefore view BTM as a method for improving the utility of trajectory-based dataset condensation, rather than as a standalone solution for safe data sharing or clinical deployment. Any practical use should be accompanied by explicit privacy assessment, subgroup-level fairness evaluation, and task-specific external validation. An important direction for future work is to combine structured trajectory-based condensation with formal privacy guarantees and stronger safeguards against bias, misuse, and over-interpretation of condensed data.



\bibliography{main}
\bibliographystyle{tmlr}

\newpage
\appendix

\setcounter{section}{0}
\setcounter{theorem}{0}
\setcounter{equation}{0}
\setcounter{table}{0}
\setcounter{lemma}{0}

\section{PIPELINE OF PROPOSED METHOD}
\label{sec:btm-algo}

\begin{algorithm}[H]
\caption{Dataset Condensation Using Trajectory Surrogates}
\label{alg:condensation}
\begin{algorithmic}[1]
\REQUIRE Collection of $M$ optimised Bézier surrogates $\{\Phi_{\boldsymbol{\phi}^\star}^{(m)}\}_{m=1}^M$, synthetic dataset \(\tilde{\mathcal{D}} = \{(\tilde{\mathbf{x}}_i, \tilde{\mathbf{y}}_i)\}_{i=1}^{|\tilde{\mathcal{D}}|}\), number of classes $C$, input dimension $d$, student learning rate $\eta_s$, meta learning rate $\eta_{\mathbf{x}}$, number of student steps $N$, batch size $b$, maximum iterations $T_{\max}$
\ENSURE Optimised synthetic dataset $\tilde{\mathcal{D}}^\star$
\STATE Stack synthetic inputs into matrix $\tilde{\mathbf{X}} \in \mathbb{R}^{|\tilde{\mathcal{D}}| \times d}$
\STATE Stack synthetic labels into tensor $\tilde{\mathbf{Y}} \in \mathbb{R}^{|\tilde{\mathcal{D}}| \times C}$
\FOR{$i = 1$ to $T_{\max}$}
    \STATE Sample surrogate index $m \sim \mathcal{U}\{1, \ldots, M\}$
    \STATE Sample $t_s, t_e \sim \mathcal{U}(0,1)$ such that $t_s < t_e$
    \STATE $\boldsymbol{\theta}_s \leftarrow \Phi_{\boldsymbol{\phi}^\star}^{(m)}(t_s)$ \COMMENT{Start position on surrogate}
    \STATE $\boldsymbol{\theta}_e \leftarrow \Phi_{\boldsymbol{\phi}^\star}^{(m)}(t_e)$ \COMMENT{Target position on surrogate}
    \STATE $\hat{\boldsymbol{\theta}}_0 \leftarrow \boldsymbol{\theta}_s$ \COMMENT{Initialise student model}

    \FOR{$n = 0$ to $N-1$}
        \STATE Sample mini-batch $B_n \subset \tilde{\mathcal{D}}$, $|B_n| = b$
        \STATE $\hat{\boldsymbol{\theta}}_{n+1} \leftarrow \hat{\boldsymbol{\theta}}_{n} - \eta_s \nabla_{\hat{\boldsymbol{\theta}}_{n}} \!\left[ \frac{1}{b} \sum_{\substack{(\tilde{\mathbf{x}}, \tilde{\mathbf{y}}) \\ \in B_n}} \ell(\hat{\boldsymbol{\theta}}_{n}; \tilde{\mathbf{x}}, \tilde{\mathbf{y}})\right]$
    \ENDFOR

    \STATE $\mathcal{L}_{\mathrm{BTM}} \leftarrow \dfrac{\|\hat{\boldsymbol{\theta}}_{N} - \boldsymbol{\theta}_e\|_2^2}{\|\boldsymbol{\theta}_s - \boldsymbol{\theta}_e\|_2^2}$ \COMMENT{Normalised matching loss}

    \STATE $\boldsymbol{g}_L \leftarrow \nabla_{\hat{\boldsymbol{\theta}}_{N}} \mathcal{L}_{\text{BTM}} =
    \dfrac{2(\hat{\boldsymbol{\theta}}_{N} - \boldsymbol{\theta}_e)}{\|\boldsymbol{\theta}_s - \boldsymbol{\theta}_e\|_2^2}$ \COMMENT{Gradient signal}

    \STATE Compute \(\nabla_{\tilde{\mathbf{X}}}\mathcal{L}_{\mathrm{BTM}} \;\approx\; -\eta_s \sum_{n=0}^{N-1} \frac{1}{|B_n|} \sum_{(\tilde{\mathbf{x}}, \tilde{\mathbf{y}}) \in B_n} \nabla_{\tilde{\mathbf{X}}} \Big\langle \nabla_{\boldsymbol{\theta}} \ell(\hat{\boldsymbol{\theta}}_{n}; \tilde{\mathbf{x}}, \tilde{\mathbf{y}}), \boldsymbol{g}_L \Big\rangle, \)

    \STATE $\tilde{\mathbf{X}} \leftarrow \tilde{\mathbf{X}} - \eta_{\mathbf{x}} \nabla_{\tilde{\mathbf{X}}}\mathcal{L}_{\mathrm{BTM}}$ \COMMENT{Update synthetic inputs}
\ENDFOR
\STATE \textbf{return} $\tilde{\mathcal{D}}^\star = \tilde{\mathcal{D}}$
\end{algorithmic}
\end{algorithm}

\section{Proofs}
\label{sec:proofs}

This section gathers the proofs of the main theoretical results. We first prove the geometric lower bounds underlying trajectory matching, namely Theorem~\ref{thm:projection-lb} and Theorem~\ref{thm:rank-bottleneck}, which formalise the restriction imposed by the student’s reachable span. We then prove Theorem~\ref{thm:bezier-span-method}, which establishes the low-dimensional structure of the displacements induced by quadratic Bézier surrogate trajectories. Finally, we prove Theorem~\ref{thm:bez-sur} and Corollary~\ref{cor:bez_supervision}. The corollary is a conditional statement rather than an algorithmic guarantee; it isolates a sufficient condition under which a synthetic trajectory would inherit the surrogate’s prediction-space behaviour.

Throughout, we retain the notation introduced in the main text. All norms on parameter vectors are Euclidean unless stated otherwise, and \(\|\cdot\|_F\) denotes the Frobenius norm for matrices.

\subsection{Proof of Theorem~\ref{thm:projection-lb}}
\label{sec:proof-projection-lb}

\begin{proof}
Fix a segment \(k\). Under the first-order, no-momentum student update
\begin{equation}
\hat{\boldsymbol{\theta}}_{n+1}
=
\hat{\boldsymbol{\theta}}_{n}
-
\eta_s \mathbf{g}_n,
\qquad n=0,\dots,N-1,
\end{equation}
the student displacement after \(N\) inner steps is
\begin{equation}
\boldsymbol{\delta}_k
=
\hat{\boldsymbol{\theta}}_N - \boldsymbol{\theta}_{s_k}
=
-\eta_s \sum_{n=0}^{N-1}\mathbf{g}_n.
\end{equation}
Since each \(\mathbf{g}_n\) belongs to the reachable gradient span
\begin{equation}
\mathcal{G}_k = \mathrm{span}\{\mathbf{g}_0,\dots,\mathbf{g}_{N-1}\},
\end{equation}
it follows immediately that
\begin{equation}
\boldsymbol{\delta}_k \in \mathcal{G}_k.
\end{equation}

By definition, the trajectory matching loss for segment \(k\) is
\begin{equation}
\mathcal{L}_{\mathrm{TM}}(s_k,e_k;\tilde{\mathcal{D}})
=
\|\boldsymbol{\delta}_k - \boldsymbol{\Delta}_k\|_2^2.
\end{equation}
Thus, for fixed teacher displacement \(\boldsymbol{\Delta}_k\), the loss is the squared distance from \(\boldsymbol{\Delta}_k\) to a point \(\boldsymbol{\delta}_k\) constrained to lie in \(\mathcal{G}_k\). Therefore
\begin{equation}
\mathcal{L}_{\mathrm{TM}}(s_k,e_k;\tilde{\mathcal{D}})
\ge
\min_{\mathbf{v}\in\mathcal{G}_k}\|\mathbf{v}-\boldsymbol{\Delta}_k\|_2^2.
\end{equation}

Let \(P_{\mathcal{G}_k}\) denote the orthogonal projector onto \(\mathcal{G}_k\). The orthogonal decomposition of \(\boldsymbol{\Delta}_k\) with respect to \(\mathcal{G}_k\) is
\begin{equation}
\boldsymbol{\Delta}_k
=
P_{\mathcal{G}_k}\boldsymbol{\Delta}_k
+
(I-P_{\mathcal{G}_k})\boldsymbol{\Delta}_k,
\end{equation}
where
\begin{equation}
P_{\mathcal{G}_k}\boldsymbol{\Delta}_k \in \mathcal{G}_k,
\qquad
(I-P_{\mathcal{G}_k})\boldsymbol{\Delta}_k \in \mathcal{G}_k^\perp.
\end{equation}
Hence, for any \(\mathbf{v}\in\mathcal{G}_k\),
\begin{equation}
\mathbf{v}-\boldsymbol{\Delta}_k
=
\bigl(\mathbf{v}-P_{\mathcal{G}_k}\boldsymbol{\Delta}_k\bigr)
-
(I-P_{\mathcal{G}_k})\boldsymbol{\Delta}_k.
\end{equation}
The first term lies in \(\mathcal{G}_k\), whereas the second lies in \(\mathcal{G}_k^\perp\), so the two terms are orthogonal. By the Pythagorean theorem,
\begin{equation}
\|\mathbf{v}-\boldsymbol{\Delta}_k\|_2^2
=
\|\mathbf{v}-P_{\mathcal{G}_k}\boldsymbol{\Delta}_k\|_2^2
+
\|(I-P_{\mathcal{G}_k})\boldsymbol{\Delta}_k\|_2^2
\ge
\|(I-P_{\mathcal{G}_k})\boldsymbol{\Delta}_k\|_2^2.
\end{equation}
Taking \(\mathbf{v}=\boldsymbol{\delta}_k\in\mathcal{G}_k\) yields
\begin{equation}
\mathcal{L}_{\mathrm{TM}}(s_k,e_k;\tilde{\mathcal{D}})
=
\|\boldsymbol{\delta}_k-\boldsymbol{\Delta}_k\|_2^2
\ge
\|(I-P_{\mathcal{G}_k})\boldsymbol{\Delta}_k\|_2^2.
\end{equation}

It remains to identify the minimiser of the best-in-subspace problem. Equality in
\begin{equation}
\|\mathbf{v}-\boldsymbol{\Delta}_k\|_2^2
=
\|\mathbf{v}-P_{\mathcal{G}_k}\boldsymbol{\Delta}_k\|_2^2
+
\|(I-P_{\mathcal{G}_k})\boldsymbol{\Delta}_k\|_2^2
\end{equation}
holds if and only if
\begin{equation}
\|\mathbf{v}-P_{\mathcal{G}_k}\boldsymbol{\Delta}_k\|_2^2 = 0,
\end{equation}
that is, if and only if
\begin{equation}
\mathbf{v} = P_{\mathcal{G}_k}\boldsymbol{\Delta}_k.
\end{equation}
Therefore the minimiser of \(\min_{\mathbf{v}\in\mathcal{G}_k}\|\mathbf{v}-\boldsymbol{\Delta}_k\|_2^2\) is
\begin{equation}
\mathbf{v}^\star = P_{\mathcal{G}_k}\boldsymbol{\Delta}_k.
\end{equation}
This identifies the tightest point in the reachable span. Whether the realised student displacement \(\boldsymbol{\delta}_k\) attains this minimiser depends on the synthetic-data optimisation and is not asserted here. This proves the theorem.
\end{proof}

\subsection{Proof of Theorem~\ref{thm:rank-bottleneck}}
\label{sec:proof-rank-bottleneck}

\begin{proof}
For each segment \(k\), the student displacement satisfies
\begin{equation}
\boldsymbol{\delta}_k \in \mathcal{G}_k.
\end{equation}
Define the global reachable span by
\begin{equation}
\mathcal{G}
=
\mathrm{span}\{\mathbf{g}^{(k)}_n : k=1,\dots,K,\; n=0,\dots,N-1\}.
\end{equation}
Since
\begin{equation}
\mathcal{G}_k = \mathrm{span}\{\mathbf{g}^{(k)}_0,\dots,\mathbf{g}^{(k)}_{N-1}\},
\end{equation}
it follows that \(\mathcal{G}_k \subseteq \mathcal{G}\) for every \(k\). Hence
\begin{equation}
\boldsymbol{\delta}_k \in \mathcal{G}
\qquad \text{for all } k=1,\dots,K.
\end{equation}

Define
\begin{equation}
\mathcal{L}_{\mathrm{TM}}^{(k)}
:=
\mathcal{L}_{\mathrm{TM}}(s_k,e_k;\tilde{\mathcal{D}})
=
\|\boldsymbol{\delta}_k-\boldsymbol{\Delta}_k\|_2^2.
\end{equation}
Since \(\boldsymbol{\delta}_k\in\mathcal{G}\), this loss is the squared distance from \(\boldsymbol{\Delta}_k\) to a point in \(\mathcal{G}\). Exactly as in the proof of Theorem~\ref{thm:projection-lb}, if \(P_{\mathcal{G}}\) denotes the orthogonal projector onto \(\mathcal{G}\), then
\begin{equation}
\mathcal{L}_{\mathrm{TM}}^{(k)}
\ge
\min_{\mathbf{v}\in\mathcal{G}}\|\mathbf{v}-\boldsymbol{\Delta}_k\|_2^2
=
\|(I-P_{\mathcal{G}})\boldsymbol{\Delta}_k\|_2^2.
\end{equation}
Thus, for every \(k\),
\begin{equation}
\mathcal{L}_{\mathrm{TM}}^{(k)}
\ge
\|(I-P_{\mathcal{G}})\boldsymbol{\Delta}_k\|_2^2.
\end{equation}

Summing over \(k=1,\dots,K\) gives
\begin{equation}
\sum_{k=1}^K \mathcal{L}_{\mathrm{TM}}^{(k)}
\ge
\sum_{k=1}^K
\|(I-P_{\mathcal{G}})\boldsymbol{\Delta}_k\|_2^2.
\end{equation}
Now recall the teacher displacement matrix
\begin{equation}
\mathbf{A}
=
[\boldsymbol{\Delta}_1,\dots,\boldsymbol{\Delta}_K]
\in \mathbb{R}^{p\times K}.
\end{equation}
Then
\begin{equation}
(I-P_{\mathcal{G}})\mathbf{A}
=
\bigl[
(I-P_{\mathcal{G}})\boldsymbol{\Delta}_1,
\dots,
(I-P_{\mathcal{G}})\boldsymbol{\Delta}_K
\bigr].
\end{equation}
Using the identity
\begin{equation}
\|\mathbf{M}\|_F^2 = \sum_{k=1}^K \|\mathbf{M}_{:,k}\|_2^2
\end{equation}
for any matrix \(\mathbf{M}\in\mathbb{R}^{p\times K}\), we obtain
\begin{equation}
\sum_{k=1}^K
\|(I-P_{\mathcal{G}})\boldsymbol{\Delta}_k\|_2^2
=
\|(I-P_{\mathcal{G}})\mathbf{A}\|_F^2.
\end{equation}
Therefore
\begin{equation}
\sum_{k=1}^K \mathcal{L}_{\mathrm{TM}}^{(k)}
\ge
\|(I-P_{\mathcal{G}})\mathbf{A}\|_F^2,
\end{equation}
and dividing by \(K\) yields
\begin{equation}
\frac{1}{K}\sum_{k=1}^K \mathcal{L}_{\mathrm{TM}}^{(k)}
\ge
\frac{1}{K}\|(I-P_{\mathcal{G}})\mathbf{A}\|_F^2.
\end{equation}

It remains to minimise this residual over all \(r\)-dimensional subspaces \(\mathcal{G}\subset\mathbb{R}^p\). Let the singular value decomposition of \(\mathbf{A}\) be
\begin{equation}
\mathbf{A}
=
\mathbf{U}\boldsymbol{\Sigma}\mathbf{V}^\top,
\end{equation}
with singular values ordered as
\begin{equation}
\sigma_1(\mathbf{A}) \ge \sigma_2(\mathbf{A}) \ge \cdots \ge \sigma_{r_{\mathbf{A}}}(\mathbf{A}) > 0,
\end{equation}
where
\begin{equation}
r_{\mathbf{A}} := \operatorname{rank}(\mathbf{A}).
\end{equation}
The optimisation problem
\begin{equation}
\min_{\dim(\mathcal{G})=r}
\|(I-P_{\mathcal{G}})\mathbf{A}\|_F^2
\end{equation}
is equivalent to the problem of finding the best rank-\(r\) approximation to \(\mathbf{A}\) in Frobenius norm, since \(P_{\mathcal{G}}\mathbf{A}\) has all of its columns in \(\mathcal{G}\) and therefore has rank at most \(r\). By the Eckart--Young--Mirsky theorem, the optimal choice is to take \(\mathcal{G}\) to be the span of the top \(r\) left singular vectors of \(\mathbf{A}\), in which case the minimum residual is
\begin{equation}
\min_{\dim(\mathcal{G})=r}
\|(I-P_{\mathcal{G}})\mathbf{A}\|_F^2
=
\sum_{j=r+1}^{r_{\mathbf{A}}}\sigma_j(\mathbf{A})^2.
\end{equation}
Dividing by \(K\) gives
\begin{equation}
\min_{\dim(\mathcal{G})=r}
\frac{1}{K}\|(I-P_{\mathcal{G}})\mathbf{A}\|_F^2
=
\frac{1}{K}\sum_{j=r+1}^{r_{\mathbf{A}}}\sigma_j(\mathbf{A})^2.
\end{equation}
This proves the theorem.
\end{proof}

\subsection{Proof of Theorem~\ref{thm:bezier-span-method}}
\label{sec:proof-bezier-span-method}

\begin{proof}
Consider the quadratic Bézier trajectory
\begin{equation}
\Phi(t)
=
(1-t)^2 \boldsymbol{\theta}_0
+
2t(1-t)\boldsymbol{\phi}
+
t^2 \boldsymbol{\theta}_T,
\qquad t\in[0,1].
\end{equation}
For any \(0 \le t_s < t_e \le 1\), define the segment displacement
\begin{equation}
\boldsymbol{\Delta}(t_s,t_e)
:=
\Phi(t_e)-\Phi(t_s).
\end{equation}

We begin by rewriting \(\Phi(t)\) in a form that makes its underlying linear structure explicit. Expanding the quadratic terms gives
\begin{equation}
\Phi(t)
=
\boldsymbol{\theta}_0
+
2t(\boldsymbol{\phi}-\boldsymbol{\theta}_0)
+
t^2(\boldsymbol{\theta}_0-2\boldsymbol{\phi}+\boldsymbol{\theta}_T).
\end{equation}
Hence, for any \(t_s,t_e\in[0,1]\),
\begin{equation}
\begin{aligned}
\boldsymbol{\Delta}(t_s,t_e)
&=
2(t_e-t_s)(\boldsymbol{\phi}-\boldsymbol{\theta}_0)
+
(t_e^2-t_s^2)(\boldsymbol{\theta}_0-2\boldsymbol{\phi}+\boldsymbol{\theta}_T) \\
&=
(t_e-t_s)\Big[
2(\boldsymbol{\phi}-\boldsymbol{\theta}_0)
+
(t_e+t_s)(\boldsymbol{\theta}_0-2\boldsymbol{\phi}+\boldsymbol{\theta}_T)
\Big].
\end{aligned}
\end{equation}
Therefore every segment displacement belongs to the subspace
\begin{equation}
\mathrm{span}\bigl\{
\boldsymbol{\phi}-\boldsymbol{\theta}_0,\;
\boldsymbol{\theta}_0-2\boldsymbol{\phi}+\boldsymbol{\theta}_T
\bigr\}.
\end{equation}
By definition,
\begin{equation}
V_{\Phi}
:=
\mathrm{span}
\big\{
\Phi(t_e)-\Phi(t_s)
:\; 0 \le t_s < t_e \le 1
\big\},
\end{equation}
so we conclude that
\begin{equation}
V_{\Phi}
\subseteq
\mathrm{span}\bigl\{
\boldsymbol{\phi}-\boldsymbol{\theta}_0,\;
\boldsymbol{\theta}_0-2\boldsymbol{\phi}+\boldsymbol{\theta}_T
\bigr\}.
\end{equation}
Since the latter space has dimension at most \(2\), it follows that
\begin{equation}
\dim(V_{\Phi}) \le 2.
\end{equation}

This proves the first claim. For the second claim, let
\begin{equation}
\mathbf{A}_{\Phi}
=
[
\boldsymbol{\Delta}(t_s^{(1)},t_e^{(1)}),
\dots,
\boldsymbol{\Delta}(t_s^{(K)},t_e^{(K)})
]
\in \mathbb{R}^{p\times K}
\end{equation}
be any displacement matrix formed from segments of \(\Phi\). Each column of \(\mathbf{A}_{\Phi}\) lies in \(V_{\Phi}\), and we have shown that \(\dim(V_{\Phi}) \le 2\). Hence the column space of \(\mathbf{A}_{\Phi}\) has dimension at most \(2\), so
\begin{equation}
\operatorname{rank}(\mathbf{A}_{\Phi}) \le 2.
\end{equation}
Therefore any displacement matrix constructed from segments of a single quadratic Bézier trajectory has rank at most \(2\).
\end{proof}

\subsection{Auxiliary lemma for Theorem~\ref{thm:bez-sur}}
\label{sec:lemma}
We first characterise the deviation of a quadratic Bézier curve from the chord joining its endpoints.

\begin{lemma}
\label{lem:bezier_chord}
Let
\begin{equation}
c(t) := (1-t)\boldsymbol{\theta}_0 + t\boldsymbol{\theta}_T.
\end{equation}
Then, for all \(t \in [0,1]\),
\begin{equation}
\Phi(t)-c(t)
=
t(1-t)\bigl(2\boldsymbol{\phi}^\star-\boldsymbol{\theta}_0-\boldsymbol{\theta}_T\bigr),
\end{equation}
and hence
\begin{equation}
\|\Phi(t)-c(t)\|_2
=
\frac{t(1-t)}{2}\kappa.
\end{equation}
In particular,
\begin{equation}
\sup_{t \in [0,1]} \|\Phi(t)-c(t)\|_2
=
\frac{\kappa}{8}.
\end{equation}
\end{lemma}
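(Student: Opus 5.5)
The plan is a direct algebraic computation followed by a one-variable maximisation. First, I will expand both curves in the basis $\{\boldsymbol{\theta}_0,\boldsymbol{\phi}^\star,\boldsymbol{\theta}_T\}$ and subtract coefficient by coefficient:
\[
\Phi(t)-c(t)=\bigl[(1-t)^2-(1-t)\bigr]\boldsymbol{\theta}_0+2t(1-t)\boldsymbol{\phi}^\star+\bigl[t^2-t\bigr]\boldsymbol{\theta}_T .
\]
Since $(1-t)^2-(1-t)=-t(1-t)$ and $t^2-t=-t(1-t)$, the right-hand side factors as $t(1-t)\bigl(2\boldsymbol{\phi}^\star-\boldsymbol{\theta}_0-\boldsymbol{\theta}_T\bigr)$. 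This is the first identity. It is consistent with the expansion $\Phi(t)=\boldsymbol{\theta}_0+2t(\boldsymbol{\phi}-\boldsymbol{\theta}_0)+t^2(\boldsymbol{\theta}_0-2\boldsymbol{\phi}+\boldsymbol{\theta}_T)$ already used in the proof of Theorem~\ref{thm:bezier-span-method}, and I can use that expansion as a cross-check.

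Next, I take Euclidean norms. Because $t(1-t)\ge 0$ on $[0,1]$, absolute homogeneity gives
\[
\|\Phi(t)-c(t)\|_2=t(1-t)\,\|2\boldsymbol{\phi}^\star-\boldsymbol{\theta}_0-\boldsymbol{\theta}_T\|_2 .
\]
By the definition of $\kappa$ in Theorem~\ref{thm:bez-sur}, $\kappa=2\|\boldsymbol{\theta}_0-2\boldsymbol{\phi}^\star+\boldsymbol{\theta}_T\|_2$, and the norm is invariant under sign flip. So the vector norm equals $\kappa/2$, which yields $\|\Phi(t)-c(t)\|_2=\tfrac{t(1-t)}{2}\kappa$.

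Finally, I maximise the scalar $t(1-t)$ over $[0,1]$. It is a concave parabola with its vertex at $t=1/2$, where it attains the value $1/4$. Hence the supremum is $\kappa/8$, and it is attained rather than merely approached.

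There is no real obstacle here. The only points needing care are the sign conventions, namely that $2\boldsymbol{\phi}^\star-\boldsymbol{\theta}_0-\boldsymbol{\theta}_T$ is the negative of the vector appearing in $\kappa$, and keeping the factor of $2$ in the definition of $\kappa$ consistent. The degenerate case $\kappa=0$, where $\Phi$ is exactly the chord, is covered automatically by the same formula.
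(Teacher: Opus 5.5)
Your proposal is correct and follows the paper's proof essentially step for step. Both subtract coefficient-wise using $(1-t)^2-(1-t)=t^2-t=-t(1-t)$, take norms using the sign-invariance relation $\kappa = 2\|2\boldsymbol{\phi}^\star-\boldsymbol{\theta}_0-\boldsymbol{\theta}_T\|_2$, and bound $t(1-t)\le \tfrac14$ with equality at $t=\tfrac12$.
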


\begin{proof}
Using the definitions of \(\Phi(t)\) and \(c(t)\),
\begin{equation}
\begin{aligned}
\Phi(t)-c(t)
&=
\Bigl((1-t)^2-(1-t)\Bigr)\boldsymbol{\theta}_0
+
2t(1-t)\boldsymbol{\phi}^\star
+
\Bigl(t^2-t\Bigr)\boldsymbol{\theta}_T.
\end{aligned}
\end{equation}
Since
\begin{equation}
(1-t)^2-(1-t) = -t(1-t),
\qquad
t^2-t = -t(1-t),
\end{equation}
we obtain
\begin{equation}
\Phi(t)-c(t)
=
t(1-t)\bigl(2\boldsymbol{\phi}^\star-\boldsymbol{\theta}_0-\boldsymbol{\theta}_T\bigr).
\end{equation}
Taking Euclidean norms and using the definition
\begin{equation}
\kappa
=
2\|\boldsymbol{\theta}_0 - 2\boldsymbol{\phi}^\star + \boldsymbol{\theta}_T\|_2
=
2\|2\boldsymbol{\phi}^\star-\boldsymbol{\theta}_0-\boldsymbol{\theta}_T\|_2,
\end{equation}
gives
\begin{equation}
\|\Phi(t)-c(t)\|_2
=
\frac{t(1-t)}{2}\kappa.
\end{equation}
Finally, \(t(1-t)\le \tfrac14\) for all \(t\in[0,1]\), with equality at \(t=\tfrac12\). Therefore
\begin{equation}
\sup_{t \in [0,1]} \|\Phi(t)-c(t)\|_2
=
\frac{\kappa}{8}.
\end{equation}
\end{proof}

\subsection{Proof of Theorem~\ref{thm:bez-sur}}
\label{sec:proof-bez-sur}

\begin{proof}
We prove the two claims separately.

\paragraph{(i) Smooth curvature.}
Consider the optimised quadratic Bézier surrogate
\begin{equation}
\Phi(t)
=
(1-t)^2\boldsymbol{\theta}_0
+
2t(1-t)\boldsymbol{\phi}^\star
+
t^2\boldsymbol{\theta}_T.
\end{equation}
Differentiating twice with respect to \(t\) yields
\begin{equation}
\Phi''(t)
=
2(\boldsymbol{\theta}_0 - 2\boldsymbol{\phi}^\star + \boldsymbol{\theta}_T),
\end{equation}
which is constant in \(t\). Hence
\begin{equation}
\sup_{t\in[0,1]}\|\Phi''(t)\|_2
=
2\|\boldsymbol{\theta}_0 - 2\boldsymbol{\phi}^\star + \boldsymbol{\theta}_T\|_2
=
\kappa.
\end{equation}
The chord-deviation bound in Theorem~\ref{thm:bez-sur}(i) follows from Lemma~\ref{lem:bezier_chord}.

\paragraph{(ii) Prediction fidelity.}
Fix any \(\boldsymbol{x}\in\mathcal{X}\) and \(t\in[0,1]\). By the \(L_f\)-Lipschitz continuity of the model map in parameter space on the set traced by \(\Phi\) and \(\gamma\),
\begin{equation}
\|f_{\Phi(t)}(\boldsymbol{x}) - f_{\gamma(t)}(\boldsymbol{x})\|_2
\le
L_f\|\Phi(t)-\gamma(t)\|_2.
\end{equation}
We now bound the parameter-space discrepancy by introducing the endpoint chord \(c(t)\). By the triangle inequality,
\begin{equation}
\|\Phi(t)-\gamma(t)\|_2
\le
\|\Phi(t)-c(t)\|_2
+
\|\gamma(t)-c(t)\|_2.
\end{equation}
By Lemma~\ref{lem:bezier_chord},
\begin{equation}
\|\Phi(t)-c(t)\|_2 \le \frac{\kappa}{8},
\end{equation}
and by definition of \(D\),
\begin{equation}
\|\gamma(t)-c(t)\|_2 \le D.
\end{equation}
Therefore
\begin{equation}
\|\Phi(t)-\gamma(t)\|_2
\le
\frac{\kappa}{8}+D,
\end{equation}
and consequently
\begin{equation}
\|f_{\Phi(t)}(\boldsymbol{x}) - f_{\gamma(t)}(\boldsymbol{x})\|_2
\le
L_f\left(\frac{\kappa}{8}+D\right).
\end{equation}
Taking the supremum over \(\boldsymbol{x}\in\mathcal{X}\) and \(t\in[0,1]\) proves the claim.
\end{proof}


\subsubsection{Teacher-path curvature intuition.}
For a generic teacher update sequence of the form
\begin{equation}
\boldsymbol{\theta}_{\tau+1} = \boldsymbol{\theta}_\tau - \eta \mathbf{u}_\tau,
\end{equation}
define the discrete second-order difference
\begin{equation}
\mathbf{h}_\tau
:=
\boldsymbol{\theta}_{\tau+1}
-2\boldsymbol{\theta}_\tau
+\boldsymbol{\theta}_{\tau-1}.
\end{equation}
Substituting the update at times \(\tau\) and \(\tau-1\) gives
\begin{equation}
\mathbf{h}_\tau = -\eta(\mathbf{u}_\tau-\mathbf{u}_{\tau-1}).
\end{equation}
If \(\mathbf{u}_\tau=\mathbf{d}_\tau+\boldsymbol{\xi}_\tau\) is decomposed into systematic drift and stochastic variability, then
\begin{equation}
\mathbf{h}_\tau
=
-\eta\big[(\mathbf{d}_\tau-\mathbf{d}_{\tau-1})
+
(\boldsymbol{\xi}_\tau-\boldsymbol{\xi}_{\tau-1})\big].
\end{equation}
Thus, discrete teacher curvature mixes optimisation drift with stochastic variability, whereas the quadratic Bézier surrogate has globally smooth curvature.

\subsection{Proof of Corollary~\ref{cor:bez_supervision}}
\label{sec:proof-cor}

\begin{proof}
Fix any \(\boldsymbol{x}\in\mathcal{X}\) and \(t\in[0,1]\). By the triangle inequality,
\begin{equation}
\|f_{\gamma_{\mathrm{stu}}(t)}(\boldsymbol{x}) - f_{\gamma(t)}(\boldsymbol{x})\|_2
\le
\|f_{\gamma_{\mathrm{stu}}(t)}(\boldsymbol{x}) - f_{\Phi(t)}(\boldsymbol{x})\|_2
+
\|f_{\Phi(t)}(\boldsymbol{x}) - f_{\gamma(t)}(\boldsymbol{x})\|_2.
\end{equation}
Using again the \(L_f\)-Lipschitz continuity of the model map,
\begin{equation}
\|f_{\gamma_{\mathrm{stu}}(t)}(\boldsymbol{x}) - f_{\Phi(t)}(\boldsymbol{x})\|_2
\le
L_f \|\gamma_{\mathrm{stu}}(t)-\Phi(t)\|_2
\le
L_f \varepsilon_{\mathrm{syn}}.
\end{equation}
Moreover, Theorem~\ref{thm:bez-sur}(ii) gives
\begin{equation}
\|f_{\Phi(t)}(\boldsymbol{x}) - f_{\gamma(t)}(\boldsymbol{x})\|_2
\le
L_f\left(\frac{\kappa}{8}+D\right).
\end{equation}
Combining the two bounds yields
\begin{equation}
\|f_{\gamma_{\mathrm{stu}}(t)}(\boldsymbol{x}) - f_{\gamma(t)}(\boldsymbol{x})\|_2
\le
L_f\left(\varepsilon_{\mathrm{syn}}+\frac{\kappa}{8}+D\right).
\end{equation}
Taking the supremum over \(\boldsymbol{x}\in\mathcal{X}\) and \(t\in[0,1]\) proves the corollary.
\end{proof}

\section{DATASET DETAILS}
\label{sec:dataset_details}

We provide implementation-specific preprocessing and representation details for each dataset. Shared experimental settings (splits, normalisation, and evaluation protocol) are described in Section~\ref{sec:exp}.

\subsection{NHS Emergency Department Datasets (Oxford, Portsmouth, Birmingham)}

For the NHS cohorts, each patient is represented by a fixed 27-dimensional feature vector derived from admission-time data. All features correspond to measurements available at presentation, and no temporal aggregation is performed. Each dataset is treated independently, with no cross-site data mixing.

Dataset-level statistics are summarised in Table~\ref{tab:nhs_examples}. Each dataset comprises routinely collected admission-time features, including demographics, vital signs, and laboratory blood tests, resulting in a total of 27 features per patient. The clinical predictors are grouped by category in Table~\ref{tab:nhs_features}. Missing values are handled via median imputation.

Access to the datasets is governed by NHS and Health Research Authority (HRA) approval (IRAS ID: 281832). Oxford data are available via the Infections in Oxfordshire Research Database\footnote{\url{https://oxfordbrc.nihr.ac.uk/research-themes/modernising-medical-microbiology-and-big-infection-diagnostics/infections-in-oxfordshire-research-database-iord/}}, while Portsmouth and Birmingham data are accessible upon reasonable request to the respective trusts.

\begin{table}[ht]
\centering
\caption{Dataset characteristics for NHS emergency department datasets.}
\label{tab:nhs_examples}
\begin{tabular}{lccc}
\toprule
                        & Oxford  & Portsmouth & Birmingham \\ 
\midrule
\# Examples             & 161,955 & 38,717     & 95,236     \\
\# Positive Examples    & 2,791   & 2,005      & 790        \\ 
\# Features             & 27      & 27         & 27         \\ 
Prevalence (\%)         & 1.7     & 5.3        & 0.8        \\
\bottomrule    
\end{tabular}
\end{table}

\begin{table}[ht]
\centering
\caption{Clinical predictors used for COVID-19 diagnosis.}
\label{tab:nhs_features}
\scalebox{0.9}{
\begin{tabular}{p{0.43\textwidth}p{0.5\textwidth}}\toprule
\multicolumn{1}{c}{\textbf{Category}} & \multicolumn{1}{c}{\textbf{Features}} \\ \midrule
Vital Signs & Heart rate, respiratory rate, oxygen saturation, systolic blood pressure, diastolic blood pressure, temperature \\ 
\cmidrule{2-2}
Blood Tests & Haemoglobin, haematocrit, mean cell volume, white cell count, neutrophil count, lymphocyte count, monocyte count, eosinophil count, basophil count, platelets \\ 
\cmidrule{2-2}
Liver Function Tests \& C-reactive protein & Albumin, alkaline phosphatase, alanine aminotransferase, bilirubin, C-reactive protein \\ 
\cmidrule{2-2}
Urea \& Electrolytes & Sodium, potassium, creatinine, urea, estimated glomerular filtration rate \\ 
\bottomrule
\end{tabular}
}
\end{table}

\subsection{eICU}

We use the pre-processed version of the eICU Collaborative Research Database available at \url{https://physionet.org/content/mimic-eicu-fiddle-feature/1.0.0/}. This representation maps each ICU stay to a fixed 402-dimensional feature vector.

Dataset characteristics are reported in Table~\ref{tab:eicu_examples} and variables used to construct the feature representation are summarised in Table~\ref{tab:eicu_features}.

Continuous variables are discretised into quantile-based bins (typically five bins), and summary statistics (minimum, maximum, and mean) are computed where applicable. Categorical variables are one-hot encoded. This results in a fully tabular representation with no temporal dimension.

\begin{table}[ht]
\centering
\caption{Dataset characteristics for eICU.}
\label{tab:eicu_examples}
\begin{tabular}{lc}
\toprule
                        & eICU    \\ 
\midrule
\# Examples             & 49,305  \\
\# Positive Examples    & 4,501   \\ 
\# Features             & 402     \\ 
Prevalence (\%)         & 9.1     \\
\bottomrule    
\end{tabular}
\end{table}

\begin{table}[ht]
\centering
\caption{Source physiological and demographic variables used to construct the 402-dimensional feature vector for eICU in-hospital mortality prediction.}
\label{tab:eicu_features}
\scalebox{0.85}{
\begin{tabular}{p{0.43\textwidth}p{0.5\textwidth}}\toprule
\multicolumn{1}{c}{\textbf{Category}} & \multicolumn{1}{c}{\textbf{Variables}} \\ \midrule
Demographics & Age, height, weight, gender \\
\cmidrule{2-2}
Admission Information & Hospital admit source, hospital admit offset, unit admit source, unit stay type, unit type, Apache admission diagnosis, airway type \\
\cmidrule{2-2}
Vital Signs & Heart rate, respiratory rate, oxygen saturation, temperature (Celsius and Fahrenheit), temperature location \\
\cmidrule{2-2}
Blood Pressure & Non-invasive systolic/diastolic/mean blood pressure, invasive systolic/diastolic/mean blood pressure, central venous pressure \\
\cmidrule{2-2}
Oxygen Support & O2 administration device, O2 level percentage \\
\cmidrule{2-2}
Laboratory Measurements & Glucose \\
\bottomrule
\end{tabular}
}
\end{table}

\subsection{MIMIC-III}

We process MIMIC-III using publicly available benchmarking code to obtain multivariate time-series representations. Only patients with at least 48 hourly observations are retained, and the first 48 hours of each stay are used for all tasks.

After removing duplicate features, each time step is represented by a 60-dimensional feature vector, yielding an input tensor of shape $48 \times 60$ per patient. Binary mask features are included to indicate the presence or absence of each measurement at each time step.

Categorical clinical variables are encoded as follows: Glasgow Coma Scale components are one-hot encoded (eye opening: 5 categories; motor response: 6 categories; verbal response: 5 categories; total score: 11 categories), and capillary refill rate is encoded as a binary variable.

The in-hospital mortality (IHM) prediction and phenotyping tasks follow the standard benchmark settings, with data characteristics summarised in~\ref{tab:mimic_examples}. For completeness, we list below the input variables used in MIMIC-III as well as the 25 phenotyping classes.

\begin{table}[ht]
\centering
\caption{Dataset characteristics for MIMIC-III.}
\label{tab:mimic_examples}
\begin{tabular}{lcc}
\toprule
                                & MIMIC-III (IHM) & MIMIC-III (Phenotyping)\\ 
\midrule
\# Examples                     & 21,156          & 21,728 \\
\# Time-steps                   & 48              & 48 \\
\# Features per time-step       & 60              & 60\\ 
\bottomrule    
\end{tabular}
\end{table}

\vspace{1cm}
\noindent \textsc{\textbf{List of features in MIMIC-III dataset}}

\begin{multicols}{3}
\nolinenumbers
\begin{enumerate}
\itemsep -0.5em
    \item Capillary refill rate-0.0
    \item Capillary refill rate-1.0
    \item Diastolic blood pressure
    \item Fraction inspired oxygen
    \item Glascow coma scale eye opening-2 To Pain
    \item Glascow coma scale eye opening-3 To speech
    \item Glascow coma scale eye opening-1 No Response
    \item Glascow coma scale eye opening-4 Spontaneously
    \item Glascow coma scale eye opening-0 None
    \item Glascow coma scale motor response-1 No Movement
    \item Glascow coma scale motor response-3 Abnormal flexion
    \item Glascow coma scale motor response-2 Abnormal extension
    \item Glascow coma scale motor response-4 Flex-withdraws
    \item Glascow coma scale motor response-5 Localizes Pain
    \item Glascow coma scale motor response-6 Obeys Commands 
    \item Glascow coma scale total-11
 
    \item Glascow coma scale total-10
    \item Glascow coma scale total-13
    \item Glascow coma scale total-12
    \item Glascow coma scale total-15
    \item Glascow coma scale total-14
    \item Glascow coma scale total-3
    \item Glascow coma scale total-5
    \item Glascow coma scale total-4
    \item Glascow coma scale total-7
    \item Glascow coma scale total-6
    \item Glascow coma scale total-9
    \item Glascow coma scale total-8
 
    \item Glascow coma scale verbal response-1 No Response
    \item Glascow coma scale verbal response-4 Confused
    \item Glascow coma scale verbal response-2 Incomprehensible sounds
    \item Glascow coma scale verbal response-3 Inappropriate Words
    \item Glascow coma scale verbal response-5 Oriented
 
    \item Glucose
    \item Heart Rate
    \item Height
    \item Mean blood pressure
    \item Oxygen saturation
    \item Respiratory rate
    \item Systolic blood pressure
    \item Temperature
    \item Weight
    \item pH
    \item mask-Capillary refill rate
    \item mask-Diastolic blood pressure
    \item mask-Fraction inspired oxygen
    \item mask-Glascow coma scale eye opening
    \item mask-Glascow coma scale motor response
    \item mask-Glascow coma scale total
    \item mask-Glascow coma scale verbal response
    \item mask-Glucose
    \item mask-Heart Rate
    \item mask-Height
    \item mask-Mean blood pressure
    \item mask-Oxygen saturation
    \item mask-Respiratory rate
    \item mask-Systolic blood pressure
    \item mask-Temperature
    \item mask-Weight
    \item mask-pH

\end{enumerate}
\end{multicols}

\noindent \textsc{\textbf{List of 25 patient disorders involved in Phenotyping in MIMIC-III dataset}}

\begin{multicols}{3}
\begin{enumerate}[itemsep=-0.2em]
\nolinenumbers
    \item Acute and unspecified renal failure
    \item Acute cerebrovascular disease
    \item Acute myocardial infarction
    \item Cardiac dysrhythmias 
    \item Chronic kidney disease
    \item Chronic obstructive pulmonary disease
    \item Complications of surgical/medical care 
    \item Conduction disorders 
    \item Congestive heart failure; non hypertensive
    
    \item Coronary atherosclerosis and related
    \item Diabetes mellitus with complications
    \item Diabetes mellitus without complication 
    \item Disorders of lipid metabolism
    \item Essential hypertension
    \item Fluid and electrolyte disorders
    \item Gastrointestinal haemorrhage
    \item Hypertension with complications
    
    \item Other liver diseases 
    \item Other lower respiratory disease
    \item Other upper respiratory disease
    \item Pleurisy; pneumothorax; pulmonary collapse
    \item Pneumonia
    \item Respiratory failure; insufficiency; arrest
    \item Septicemia (except in labour)
    \item Shock 
\end{enumerate}
\end{multicols}

\section{IMPLEMENTATION DETAILS}
\label{sec:implementation}

\subsection{Model Architectures}
\label{subsec:architectures}

\textbf{NHS Cohorts and eICU datasets.}
For tabular datasets, we use a multi-layer perceptron (MLP) \citep{rumelhart1986learning} with a single hidden layer of $h$ units, ReLU activation, and a sigmoid output layer. Dropout (0.25) is applied after the hidden layer. We set $h=256$ for eICU and $h=64$ for the NHS datasets. 

To assess cross-architecture generalisation on eICU, we consider additional MLP variants summarised in Table~\ref{tab:mlp_variants}, which vary in width and depth while keeping activation functions and regularisation fixed.

\begin{table}[H]
\centering
\caption{MLP architectures for NHS and eICU datasets.}
\vspace{-1mm}
\label{tab:mlp_variants}
\small

    \begin{tabular}{l l l}
    \toprule
    \textbf{Model} & \textbf{Architecture} & \textbf{Key Details} \\
    \midrule
    MLP-1 & Wider MLP & $2h$ units, ReLU, dropout 0.25 \\
    MLP-2 & Wider MLP & $4h$ units, ReLU, dropout 0.25 \\
    MLP-3 & Deeper MLP & 2 layers ($h$, $2h$), ReLU, dropout 0.25 \\
    MLP-4 & Deeper MLP & 3 layers ($h$, $2h$, $4h$), ReLU, dropout 0.25 \\
    \bottomrule
    \end{tabular}
\end{table}

\textbf{MIMIC-III datasets.}
For time-series data, we use a temporal convolutional network (TCN) \citep{bai2018empirical} as the backbone for dataset condensation. The model consists of a single residual temporal block with 64 channels, kernel size 9, dilation 1, BatchNorm, PReLU activations, and dropout (0.75). The network processes a $48 \times 60$ multivariate time series, with temporal features mean-pooled and passed through a linear output layer. For in-hospital mortality prediction, the output dimension is 1; for phenotyping, the architecture is unchanged except that the output dimension is 25.

For cross-architecture evaluation on the IHM task, we consider additional TCN variants and LSTM baselines summarised in Table~\ref{tab:arch_variants}. The TCN variants extend the base model through multi-scale convolutions and increased depth, while the LSTM models provide a complementary recurrent architecture for comparison.

\begin{table}[H]
\centering
\caption{Cross-architecture evaluation models on MIMIC-III.}
\vspace{-1mm}
\label{tab:arch_variants}
\small
    \begin{tabular}{l l l}
    \toprule
    \textbf{Model} & \textbf{Architecture} & \textbf{Key Details} \\
    \midrule
    TCN-1 & Multi-scale TCN & 1 block, kernels [3,5,7], 192 channels, dropout 0.5 \\
    TCN-2 & Multi-scale TCN & 2 blocks, kernels [3,5], 256 channels/layer, dropout 0.5, exp. dilation \\
    LSTM-1 & LSTM & 1 layer, hidden dim 128, unidirectional, dropout 0.25 \\
    LSTM-2 & LSTM & 1 layer, hidden dim 256, unidirectional, dropout 0.25 \\
    \bottomrule
    \end{tabular}
\end{table}

\subsection{Teacher Trajectories}

For each dataset, we generate 50 teacher trajectories by training the backbone networks from independent random initialisations.

\textbf{MTT trajectories.}
For MTT, trajectories are generated using the SGD optimiser, with dataset-specific optimisation settings summarised in Table~\ref{tab:traj_settings}.

\textbf{FTD trajectories.}
For FTD, trajectories are generated using Generalised Sharpness-Aware Minimisation (GSAM). A linear learning rate schedule is used, decaying to zero over $t_{\max}$ optimisation steps. The perturbation radius $\rho$ is coupled to the learning rate and decreases linearly from 1 to 0 over training. Unless otherwise specified, GSAM uses the same base hyperparameters as SGD.

\begin{table}[t]
\centering
\caption{SGD (MTT) teacher trajectory optimisation settings.}
\vspace{-1mm}
\label{tab:traj_settings}
\small

    \begin{tabular}{l c c c}
    \toprule
    \textbf{Dataset} & \textbf{LR} & \textbf{Momentum} & \textbf{Epochs} \\
    \midrule
    MIMIC-III (IHM) & 0.02 & 0.0 & 60 \\
    MIMIC-III (Phenotyping) & 0.05 & 0.9 & 60 \\
    eICU & 0.02 & 0.9 & 100 \\
    Oxford & 0.02 & 0.9 & 100 \\
    Portsmouth & 0.01 & 0.9 & 100 \\
    Birmingham & 0.02 & 0.9 & 100 \\
    \bottomrule
    \end{tabular}

\end{table}

\textbf{DATM trajectories.}
For DATM, GSAM-based trajectories are used for the tabular datasets (eICU, Oxford, Portsmouth, Birmingham), following the same configuration as FTD. For MIMIC-III (IHM and Phenotyping), we instead use standard SGD trajectories, as GSAM-based trajectories consistently underperform MTT in this structured time-series setting.

\textbf{Linear and convexified trajectories.}
For MCT, we learn convexified trajectories from the MTT teacher trajectories using two learned anchor points per trajectory, following the original method. For the convexified linear variant, we retain the same $\beta$-projection scheme but restrict the trajectory to a single segment between the initial and final parameters of the teacher trajectory. For the linear trajectory ablation, we use the straight line connecting the initial and final parameters of the corresponding SGD trajectory, without anchor points or $\beta$-projection.

\textbf{Bézier trajectories.}
For BTM, each full teacher trajectory is replaced by a single quadratic Bézier surrogate connecting its initialisation $\boldsymbol{\theta}_0$ and final state $\boldsymbol{\theta}_T$, together with a learnable control point $\boldsymbol{\phi}$. The continuous segments used during condensation are sampled from this surrogate, as described in Section~\ref{sec:method}. The optimisation of $\boldsymbol{\phi}$ is described below.

\subsection{Control Point Optimisation}

The control point $\boldsymbol{\phi}$ is initialised at the midpoint of the trajectory endpoints and optimised to minimise the average training loss along the Bézier curve (Algorithm~\ref{alg:control_optimization}). Since one endpoint corresponds to a random initialisation, this objective does not enforce uniformly low loss across the entire path; instead, it discourages unnecessary excursions and promotes a smoother trajectory with lower average loss between endpoints.

\begin{algorithm}[H]
\caption{Control Point Optimisation}
\label{alg:control_optimization}
\begin{algorithmic}[1]
    \REQUIRE SGD trajectory endpoints $\boldsymbol{\theta}_0, \boldsymbol{\theta}_T$, dataset $\mathcal{D}$, learning rate $\eta_{\phi}$, convergence tolerance $\epsilon$, maximum iterations $T_{\max}$, Monte Carlo samples $N_{MC}$
    \ENSURE Optimised control point $\boldsymbol{\phi}^*$
    \STATE Initialise $\boldsymbol{\phi} \leftarrow \frac{\boldsymbol{\theta}_0 + \boldsymbol{\theta}_T}{2}$
    \STATE $t \leftarrow 0$
    \WHILE{$t < T_{\max}$}
        \STATE Sample $\{t_i\}_{i=1}^{N_{MC}} \sim \mathcal{U}(0,1)$
        \STATE $\mathcal{L}_{\text{avg}} \leftarrow 0$
        \FOR{$i = 1$ to $N_{MC}$}
            \STATE $\boldsymbol{\theta}_{t_i} \leftarrow (1-t_i)^2 \boldsymbol{\theta}_0 + 2t_i(1-t_i)\boldsymbol{\phi} + t_i^2 \boldsymbol{\theta}_T$
            \STATE Sample mini-batch $\mathcal{B} \subset \mathcal{D}$
            \STATE $\mathcal{L}_{\text{avg}} \leftarrow \mathcal{L}_{\text{avg}} + \frac{1}{N_{MC}} \mathcal{L}_{\mathrm{CE}}(f_{\boldsymbol{\theta}_{t_i}}, \mathcal{B})$
        \ENDFOR
        \STATE $\boldsymbol{g} \leftarrow \nabla_{\boldsymbol{\phi}} \mathcal{L}_{\text{avg}}$
        \IF{$\|\boldsymbol{g}\|_2 < \epsilon$}
            \STATE \textbf{break}
        \ENDIF
        \STATE $\boldsymbol{\phi} \leftarrow \boldsymbol{\phi} - \eta_{\phi} \boldsymbol{g}$
        \STATE $t \leftarrow t + 1$
    \ENDWHILE
    \STATE \textbf{return} $\boldsymbol{\phi}^* = \boldsymbol{\phi}$
\end{algorithmic}
\end{algorithm}

Gradients are computed via automatic differentiation:
\begin{equation}
    \nabla_{\boldsymbol{\phi}} \mathcal{L}_{\text{avg}} 
    = \frac{1}{N_{MC}} \sum_{i=1}^{N_{MC}} 
    \nabla_{\boldsymbol{\theta}} \mathcal{L}_{\mathrm{CE}}(f_{\boldsymbol{\theta}_{t_i}}, \mathcal{B}) 
    \cdot \frac{\partial \boldsymbol{\theta}_{t_i}}{\partial \boldsymbol{\phi}},
\end{equation}
where $\frac{\partial \boldsymbol{\theta}_{t_i}}{\partial \boldsymbol{\phi}} = 2t_i(1-t_i)$.

\textbf{Hyperparameters.}
We set $\eta_\phi = 10^{-2}$, $\epsilon = 10^{-5}$, $T_{\max}=300$, and $N_{MC}=5$.

\textbf{Computational cost.}
Control point optimisation requires approximately 1–10 equivalent training epochs per trajectory, depending on dataset size. Specifically, we use 2 epochs for Portsmouth and eICU, 5 for Oxford and Birmingham, and 8 for the MIMIC-III datasets. This cost is incurred once during trajectory construction and amortised over all subsequent condensation iterations.

\subsection{Dataset Condensation}

\textbf{BTM hyperparameters.}
Trajectory parameters are selected via validation. We use continuous segments of length $\Delta t = 0.2$, with $t_s \sim \mathcal{U}(0, 0.8)$ and $t_e = t_s + 0.2$. The student performs $N=30$ inner-loop updates with learning rate $\eta_s=0.01$. Synthetic data is optimised using SGD with meta learning rate $\eta_x=100$ and momentum $0.9$, while $\eta_s$ is jointly meta-optimised (learning rate $10^{-4}$, momentum $0.5$). The batch size is set to $b=\max(2\times \textit{ipc},256)$, and condensation runs for $T_{\max}=20{,}000$ iterations.

\textbf{Baseline configurations.}
MTT and FTD use discrete trajectory segments of length $M=5$ epochs with $N=60$ student updates, and share the same synthetic data optimisation settings as BTM.

DATM also uses $M=5$ and $N=60$, but samples segments from an expanding trajectory interval. Let $(T^-, T, T^+)$ denote the lower bound, current upper bound, and final upper bound of the sampling range. Segments are initially drawn from $[T^-, T]$ and progressively expanded to $[T^-, T^+]$, aligning trajectory difficulty with the synthetic budget.

For eICU and NHS datasets:
\[
(T^-, T, T^+) =
\begin{cases}
(0, 20, 40), & 50\ \text{\emph{ipc}}, \\
(10, 20, 60), & 100\ \text{\emph{ipc}}, \\
(20, 40, 100), & 200, 500\ \text{\emph{ipc}}.
\end{cases}
\]

For MIMIC-III (IHM and Phenotyping):
\[
(T^-, T, T^+) =
\begin{cases}
(0, 10, 25), & 50\ \text{\emph{ipc}}, \\
(5, 15, 40), & 100\ \text{\emph{ipc}}, \\
(20, 40, 50), & 200\ \text{\emph{ipc}}, \\
(30, 40, 60), & 500\ \text{\emph{ipc}}.
\end{cases}
\]

For path complexity ablations (MCT and linear variants), all methods use the same segment length $\Delta t$ and number of inner-loop steps $N$ as BTM.

\subsection{Evaluation Protocol}

\textbf{Model selection during condensation.}
Synthetic datasets are evaluated every 10 outer iterations by training randomly initialised models from scratch, and selected based on validation AUPRC. We found AUPRC to be a more reliable selection metric than AUROC, as high AUPRC consistently corresponded to strong AUROC, but not vice versa. Evaluation settings during condensation are summarised in Table~\ref{tab:eval_condensation}.

\begin{table}[t]
\centering
\caption{Evaluation settings during condensation.}
\label{tab:eval_condensation}
\small
\begin{tabular}{l c c c}
\toprule
\textbf{Dataset} & \textbf{LR} & \textbf{Momentum} & \textbf{Epochs} \\
\midrule
MIMIC-III (IHM) & 0.02 & 0.9 & 60 \\
MIMIC-III (Phenotyping) & 0.05 & 0.9 & 60 \\
eICU, Oxford, Portsmouth, Birmingham & 0.05 & 0.9 & 50 \\
\bottomrule
\end{tabular}
\end{table}

\textbf{Final evaluation.}
Final results are obtained by retraining models from scratch on the selected synthetic datasets using longer training schedules (Table~\ref{tab:eval_final}).

\begin{table}[t]
\centering
\caption{Final evaluation settings.}
\label{tab:eval_final}
\small
\begin{tabular}{l c c c}
\toprule
\textbf{Dataset} & \textbf{LR} & \textbf{Momentum} & \textbf{Epochs} \\
\midrule
MIMIC-III (IHM) & 0.02 & 0.9 & 80 \\
MIMIC-III (Phenotyping) & 0.05 & 0.9 & 80 \\
eICU, Oxford, Portsmouth, Birmingham & 0.05 & 0.9 & 100 \\
\bottomrule
\end{tabular}
\end{table}

\textbf{Cross-architecture evaluation.}
We evaluate generalisation using alternative architectures. For eICU, we use the MLP variants described in Section~\ref{subsec:architectures}. For MIMIC-III, we evaluate TCN and LSTM variants. Within each model family, we use the same optimisation settings as the corresponding base architecture. For LSTM models on MIMIC-III, we instead use a learning rate of $0.01$, momentum $0.9$, and 30 epochs.

\section{Additional Experiments}
\label{sec:add_exp}

\subsection{Inner Loop Steps} 

The inner loop controls the number of gradient updates the student model takes when trained on the synthetic data before matching against the expert trajectory. 

In standard TM, this is tied to the expert optimisation epochs M, but with Bézier surrogates the continuous parameterisation $t \in [0,1]$ decouples segment length ($t_{\text{start}}$, $t_{\text{end}}$) from discrete optimisation steps. As a result, the optimal number of steps must be determined empirically. Figure \ref{fig:inner_loop} ablates $N$ at 200 \emph{ipc}. While prior methods typically require $N \ge 40$ \citep{guo2024datm}, BTM sustains strong AUROC and AUPRC even at $N = 30$, demonstrating that Bézier surrogates provide stable supervision, enabling the student to train effectively with few gradient steps.

\begin{figure}[H]
\centering
    \includegraphics[width=1\linewidth]{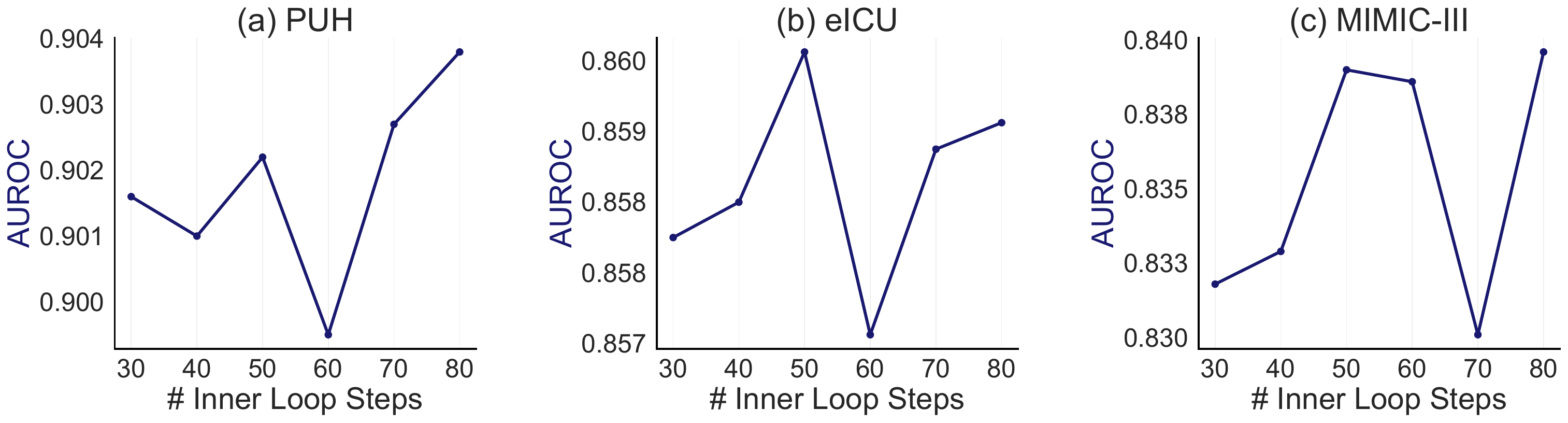}
    \caption{Impact of inner-loop steps $N$ on AUROC performance at 200 \emph{ipc}. BTM achieves strong performance with only 30 steps, reducing computational overhead. Similar trends observed for AUPRC.}
\label{fig:inner_loop}
\end{figure}

\subsection{Initialisation Strategy for the Synthetic Dataset}

Table~\ref{tab:init} examines the impact of synthetic-data initialisation on condensation performance in eICU. We compare \emph{real} initialisation, which seeds synthetic inputs from real training samples, with \emph{random} initialisation, which samples inputs from class-conditional Gaussian distributions. The two strategies yield broadly comparable performance, although real initialisation provides a modest advantage at 500 \emph{ipc}. These results suggest that BTM is not strongly dependent on direct seeding from real examples, and that competitive condensation can still be achieved from a purely synthetic starting point.

This comparison is important because the initialisation scheme affects how directly the optimisation begins from the observed data distribution. Real initialisation can place synthetic inputs closer to the data manifold at the outset and may therefore improve optimisation, whereas random initialisation provides a weaker direct dependence on individual training examples at initialisation. However, formal privacy guarantees do not follow from the initialisation strategy alone. Rather, they arise from applying differential privacy to the condensation pipeline itself. Under such a mechanism, both real and random initialisation can be used within a privacy-preserving framework, while random initialisation may still offer a conceptually cleaner starting point with reduced direct dependence on observed samples.
\begin{table}
\centering
\caption{Initialisation strategy comparison on eICU. Random initialisation remains competitive while providing stronger privacy guarantees.}
\label{tab:init}
\begin{sc}
\resizebox{0.75\linewidth}{!}{
    \begin{tabular}{ll|cccc}
    \toprule
    & \multicolumn{1}{c}{\multirow{2}{*}{\textbf{Init.}}} & \multicolumn{4}{c}{\textbf{IPC}} \\
    \cmidrule(lr){3-6}
    & \multicolumn{1}{c}{} & \textbf{50} & \textbf{100} & \textbf{200} & \textbf{500} \\
    \midrule
    \multirow{2}{*}{\textbf{AUROC}}
        & Real   & $0.854_{\pm0.002}$ & $0.859_{\pm0.001}$ & $0.861_{\pm0.001}$ & $0.874_{\pm0.002}$ \\
        & Random & $0.852_{\pm0.009}$ & $0.858_{\pm0.004}$ & $0.853_{\pm0.004}$ & $0.862_{\pm0.002}$ \\
    \cmidrule(lr){2-6}
    \multirow{2}{*}{\textbf{AUPRC}}
        & Real   & $0.479_{\pm0.002}$ & $0.486_{\pm0.001}$ & $0.476_{\pm0.001}$ & $0.506_{\pm0.001}$ \\
        & Random & $0.463_{\pm0.015}$ & $0.474_{\pm0.007}$ & $0.475_{\pm0.008}$ & $0.499_{\pm0.004}$ \\
    \bottomrule
    \end{tabular}
}
\end{sc}
\end{table}

\subsection{Teacher Segment Length in BTM}

Table~\ref{tab:segment_length} summarises the effect of teacher segment length, \(\Delta t=t_e-t_s\), on the Portsmouth dataset. When \(\Delta t\) is too small, the induced target displacement is overly restrictive; when \(\Delta t\) is too large, the corresponding displacement exceeds what the student can reliably realise within its optimisation budget. We find that \(\Delta t = 0.2\) offers the best balance between these two effects, and accordingly use this value in all experiments, as discussed earlier.

\begin{table}[http]
\centering
\caption{Impact of continuous segment length ($\Delta t$) on AUPRC for the Portsmouth dataset. A segment length of 0.2 provides the optimal look-ahead window for trajectory alignment.}
\label{tab:segment_length}
\small
\resizebox{0.75\linewidth}{!}{
    \begin{tabular}{c c c c}
    \toprule
    \multicolumn{1}{c}{\multirow{2}{*}{\textbf{Segment Length ($\Delta t$)}}} & \multicolumn{3}{c}{\textbf{IPC}} \\
    \cmidrule(lr){2-4}
     & \textbf{100} & \textbf{200} & \textbf{500} \\
    \midrule
    $0.1$ & $0.558_{\pm0.012}$ & $0.582_{\pm0.007}$ & $0.589_{\pm0.007}$ \\ 
    \rowcolor{btmgray}
    $\mathbf{0.2}$ & $\mathbf{0.572}_{\pm0.001}$ & $\mathbf{0.596}_{\pm0.001}$ & $\mathbf{0.609}_{\pm0.001}$ \\
    $0.3$ & $0.564_{\pm0.010}$ & $0.585_{\pm0.008}$ & $0.593_{\pm0.005}$ \\
    $0.4$ & $0.556_{\pm0.011}$ & $0.579_{\pm0.009}$ & $0.581_{\pm0.004}$ \\
    \bottomrule
    \end{tabular}
    }
\end{table}

\end{document}